\documentclass{article} 
\usepackage{iclr2026_conference,times}

\usepackage{amsmath,amsfonts,bm}

\def\eqref#1{equation~\ref{#1}}

\def\1{\bm{1}}

\DeclareMathAlphabet{\mathsfit}{\encodingdefault}{\sfdefault}{m}{sl}
\SetMathAlphabet{\mathsfit}{bold}{\encodingdefault}{\sfdefault}{bx}{n}

\usepackage{hyperref}
\usepackage{url}

\usepackage{amsmath}        
\usepackage{amssymb}        
\usepackage{amsfonts}       

\usepackage{amsthm}   

\newtheorem{theorem}{Theorem}   

\newtheorem{proposition}[theorem]{Proposition}

\usepackage{dsfont}         

\usepackage{booktabs}       
\usepackage{multirow}       
\usepackage{tabularx}       
\usepackage{colortbl}       
\usepackage{xspace}         

\usepackage{graphicx}       
\usepackage{subcaption}     
\usepackage{caption}        

\usepackage{algorithm}      
\usepackage{algpseudocode}  

\usepackage{microtype}      
\usepackage{nicefrac}       
\usepackage{url}            
\usepackage{hyperref}       
\usepackage{cleveref}       
\usepackage{xcolor}         
\usepackage{tcolorbox}      

\usepackage{orcidlink}     

\usepackage{wrapfig}
\definecolor{lightgreen}{rgb}{0.85, 1.0, 0.85}
\definecolor{best}{RGB}{210, 230, 250} 
\definecolor{second}{RGB}{235, 245, 255} 

\newcommand{\methodName}{Composition of Memory Experts\xspace}
\newcommand{\methodNameAcr}{CoME\xspace}

\title{Composition of Memory Experts for Diffusion World Models}

\author{Sebastian Stapf \& Pablo Acuaviva Huertos \& Aram Davtyan \& Paolo Favaro  \\
Computer Vision Group \\
Department of Computer Science\\
University of Bern\\
\texttt{\{sebastian.stapf,pablo.acuavivahuertos} \\ \texttt{aram.davytan, paolo.favaro\}@unibe.ch} \\
}

\iclrfinalcopy 
\begin{document}

\maketitle

\begin{figure}[ht!]
    \centering
    \includegraphics[width=0.8 \linewidth]{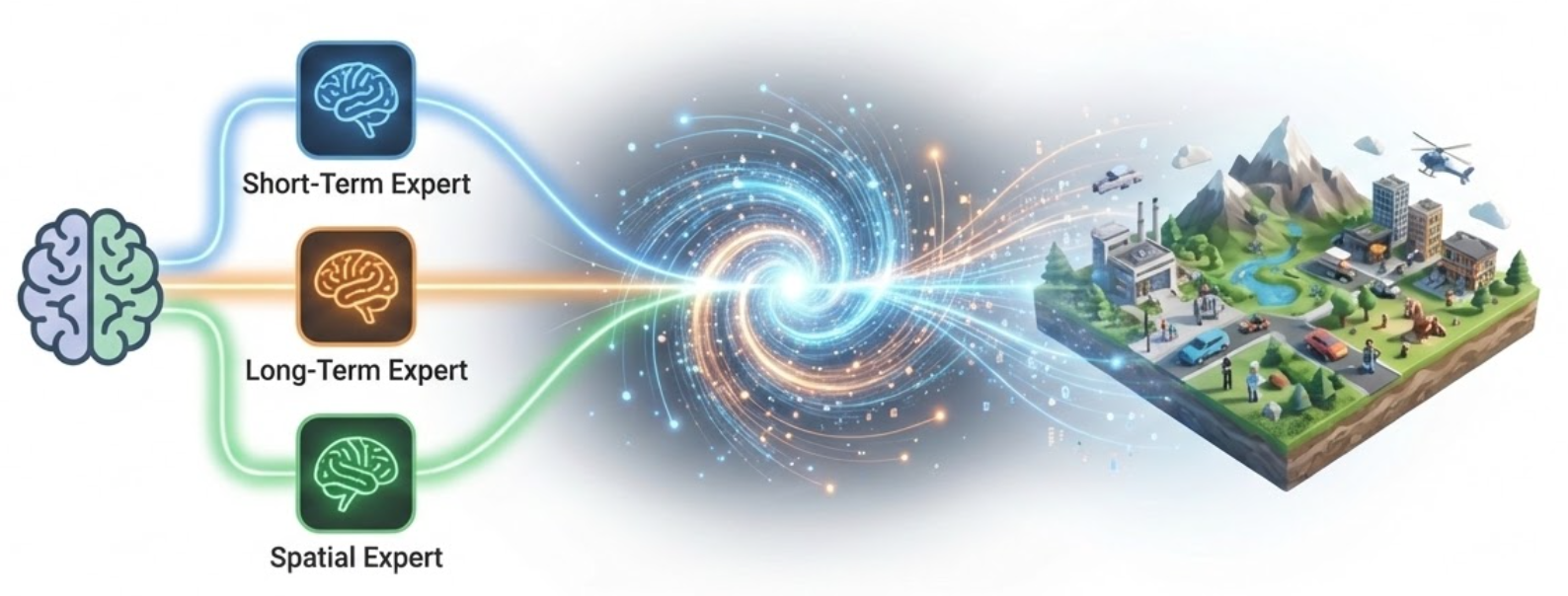}
    \caption{
    Overview of our memory-augmented diffusion world model. 
    Specialized experts capture short-term dynamics, long-term episodic memory, 
    and spatial consistency, which are combined through a product-of-experts formulation 
    to generate futures consistent with past observations.
    }
    \label{fig:method_overview}
\end{figure}

\begin{abstract}
World models aim to predict plausible futures consistent with past observations, a capability central to planning and decision-making in reinforcement learning. Yet, existing architectures face a fundamental memory trade-off: transformers preserve local detail but are bottlenecked by quadratic attention, while recurrent and state-space models scale more efficiently but compress history at the cost of fidelity. To overcome this trade-off, we suggest decoupling future–past consistency from any single architecture and instead leveraging a set of specialized experts. We introduce a diffusion-based framework that integrates heterogeneous memory models through a contrastive product-of-experts formulation. Our approach instantiates three complementary roles: a short-term memory expert that captures fine local dynamics, a long-term memory expert that stores episodic history in external diffusion weights via lightweight test-time finetuning, and a spatial long-term memory expert that enforces geometric and spatial coherence. This compositional design avoids mode collapse and scales to long contexts without incurring a quadratic cost. Across simulated and real-world benchmarks, our method improves temporal consistency, recall of past observations, and navigation performance, establishing a novel paradigm for building and operating memory-augmented diffusion world models.
\smallskip
\noindent\textbf{Project page:}
\href{https://wiqzard.github.io/composition-of-memory-experts/}{github.io/composition-of-memory-experts}
\end{abstract}

\section{Introduction}
World models (WMs) are powerful tools designed to predict plausible future states of the world based on past observations~\cite{Ha2018WorldM, Ha2018RecurrentWM, Hafner2019DreamTC, Hu2023GAIA1AG, Bruce2024GenieGI}. By learning to model the distribution of the observed environment, WMs implicitly capture its underlying rules and dynamics. This capability makes them especially attractive for decision-making in downstream tasks~\cite{Alonso2024DiffusionFW, Ha2018RecurrentWM, Hafner2019DreamTC}. Recent advances, particularly in diffusion-based world models, have demonstrated remarkable progress in generating high-quality future trajectories~\cite{Hassan2024GEMAG}.
Crucially, this predictive ability makes WMs well suited for navigation and interaction, where agents must anticipate future states in order to plan and act effectively.
In such settings, an agent explores, perceives objects, and selects trajectories under uncertainty by simulating candidate futures and evaluating their outcomes before acting.
These imagined rollouts are only useful if they remain consistent with prior observations; for example, a previously visited room should not spontaneously change its contents on a later visit. Maintaining such cross-time consistency is the crux of reliable prediction and decision-making.

Unfortunately, predictive WMs face a structural trade-off: richer temporal context improves fidelity but explodes compute. Transformer architectures can produce high-quality rollouts, yet their quadratic attention scales poorly and caps context length \cite{Vaswani2017AttentionIA}. Recurrent networks and state-space models scale more gracefully in context, but compress history into hidden states, inevitably discarding detail over time. There is no silver bullet here: each family wins in one regime and loses in another. Simply ``turning up'' context is not a sustainable solution, because training becomes unstable and expensive, and inference costs quickly blow past practical budgets.


We advocate a different stance: memory should be distributed across a system rather than confined to a single architecture. Human cognition illustrates this principle by separating fast, capacity-limited short-term memory (STM) from slower but durable long-term memory (LTM). These forms of memory differ in both mechanism and purpose, and it is precisely this division that makes them effective together~\cite{Liu2025AdvancesAC}. We adopt the same philosophy for world models (WMs): instead of burdening a single backbone with every temporal demand, we structure memory as a composition of specialized experts. This distributed approach enables WMs to balance efficiency with fidelity, supporting scalable and reliable use of past experience. Beyond composition, we address the hard reality that scaling context in training and inference is costly and brittle. Even with linear-time designs, pushing horizons indefinitely is not viable. To keep costs flat, we add a long-term memory channel that stores episodic knowledge directly in the weights of an external diffusion expert. A small number of targeted “memorization” updates amortizes future recall, enabling constant-time reuse of past experience without dragging full histories through the core model at every step.

Diffusion models are a particularly good fit for this idea because they permit principled inference-time composition of heterogeneous experts without retraining~\cite{Du2023ReduceRR}. We leverage this to integrate large pretrained backbones with lightweight adapters and auxiliary specialists into a unified framework. Naïve composition, however, is not enough: when experts still share modes that are not consistent with the past, standard product-of-experts, as we show, tends to over-sharpen common regions and collapse consistency. We therefore introduce a contrastive mechanism that factors out redundant modes during composition, preserving agreement where it matters while avoiding over-confidence where experts are merely echoing each other.

Finally, for navigation the world is not just temporal, it is spatial. Where the agent is, and how observations tie to place, is decisive for consistency. We hypothesize that anchoring memory to spatial priors (\emph{e.g.}, pose/topological cues or map-aligned keys) improves retrieval and composition, especially in RL-style tasks that revisit locations under changing viewpoints. Accordingly, we ground our memory modules in spatial structure, so imagined futures respect both what was seen and where it was seen.

\begin{tcolorbox}[colback=second, colframe=second, width=\textwidth, boxrule=0pt, arc=0pt] 
To summarize, our \textbf{main contributions} are:
\begin{enumerate} 
    \item We aim to solve the memory problem, by using multiple expert models that excel at modeling memory at different scales and formulate it as a \textbf{product-of-experts (PoE)} problem, enabling a principled probabilistic view of memory integration in video world models;
    \item We introduce \textbf{product of contrastive experts (PoCE)} a novel  strategy tailored for this setting, ensuring that heterogeneous memory experts can be fused together for consistent prediction. We provide both theoretical and empirical evidence supporting its necessity;
    \item We propose leveraging an \textbf{external diffusion model as long-term memory (LTM)} with a finetuning strategy that adapts pretrained priors while retaining domain-general capabilities, and extend this framework with \textbf{spatial long-term memory models} (SLTM) to further improve accuracy and spatial coherence in generated sequences.
\end{enumerate}
\end{tcolorbox}

\section{Prior Work}

\noindent\textbf{Video Generation and World Models }
Recent video diffusion models have demonstrated strong capabilities in modeling temporally coherent video sequences, though ensuring long-term consistency remains an open challenge~\cite{He2022LatentVD, Henschel2024StreamingT2VCD, lumiere, latte, opensora, Davtyan2022EfficientVP}. 
This has motivated several lines of work, including architectural modifications such as transformer variants designed for temporal stability~\cite{Tay2020EfficientTA, Lu2024FreeLongTL} or different modeling frameworks~\cite{Fuest2025MaskFlowDF, Ge2022LongVG}.
Alternative sampling strategies have also been explored, with methods that guide generation toward temporally aligned predictions based on prior context~\cite{Yin2023NUWAXLDO, Chen2024DiffusionFN, Song2025HistoryGuidedVD, Davtyan2022EfficientVP}.
In world modeling, early work learned dynamics from pixels; newer approaches incorporate generative memory to extend reasoning over longer horizons~\cite{Feng2023FinetuningOW, Deng2023FacingOW, Samsami2024MasteringMT, Alonso2024DiffusionFW}.\\

%
\noindent\textbf{Product of Experts and Compositional Adaptation }
Adapting large pretrained diffusion models to new domains is challenging due to their size and domain specificity. One approach, probabilistic adaptation \cite{Yang2023ProbabilisticAO}, trains a smaller diffusion model alongside a frozen pretrained one, combining their scores to flexibly adapt to new domains while retaining the robustness of the original. Combining multiple conditional experts~\cite{Du2023ReduceRR} has been applied for image generation.  Beyond images, compositionality has been explored for task and visual planning~\cite{Liu2022CompositionalVG}, as well as for factorizing language instructions to guide video generation models~\cite{DBLP:conf/nips/AjayHDLGJTKSA23}. 


\noindent\textbf{Memory for Video Models}
SlowFast~\cite{hong2025slowfastvgen} introduces a two-stage memory mechanism, applying LoRA adapters~\cite{Hu2021LoRALA} to a pretrained diffusion model for fast adaptation to recent sequences (\textit{fast learning}), while maintaining a separate slower loop to consolidate changes for future predictions (\textit{slow learning}). Unlike our approach, this method follows a meta-learning-style memory paradigm and requires additional training of the potentially large pretrained network. Other approaches implement memory by explicitly storing relevant past frames and learning how to retrieve and inject them into the model’s context window~\cite{Xiao2025WORLDMEMLC}. However, it introduces challenges related to increasing storage demands and retrieval complexity. Other lines of work utilize spatial memory models that retrieve relevant past context based on camera position~\cite{Xiao2025WORLDMEMLC, DBLP:journals/corr/abs-2506-03141, Wu2025VideoWM}. And lastly approaches that implement recurrent networks such as state-space models into their world model~\cite{DBLP:journals/corr/abs-2505-22246, Po2025LongContextSV}.

\section{Memory Adaptation}

\subsection{Background}
\label{sec:prelim}
\paragraph{Denoising Diffusion Probabilistic Models}
Diffusion models \cite{Ho2020DenoisingDP} aim to learn a data distribution $q(\mathbf{x}_0)$ by introducing a sequence of $T$ progressively noisier versions $\{\mathbf{x}_t\}_{t=1}^T$ of the data $\mathbf{x}_0$.  This is done via a forward Markov chain
where each transition probability
$q(\mathbf{x}_t | \mathbf{x}_{t-1}) = \mathcal{N}(\mathbf{x}_t; \sqrt{1 - \beta_t} \, \mathbf{x}_{t-1}, \beta_t \mathbf{I})$ is Gaussian
with a predefined noise schedule $\{\beta_t\}_{t=1}^T$, where $0 < \beta_t \leq 1$. 
The generative process learns to reverse this diffusion, modeling 
$p_\theta(\mathbf{x}_{t-1} | \mathbf{x}_t) = \mathcal{N}(\mathbf{x}_{t-1}; \mu_\theta(\mathbf{x}_t, t), \tilde{\beta}_t \mathbf{I}),$ which approximates the reverse conditional $q(\mathbf{x}_{t-1} | \mathbf{x}_t)$. The mean $\mu_\theta(\mathbf{x}_t, t)$ is predicted using a neural network that outputs a noise estimate $\epsilon_\theta(\mathbf{x}_t, t)$. 
This model is trained by minimizing the simplified objective of \cite{Ho2020DenoisingDP}
\begin{align}\label{eq:loss}
\mathcal{L}(\theta) = \mathbb{E}_{\mathbf{x}_0, t, \epsilon \sim \mathcal{N}(0, \mathbf{I})} \left[ \left\| \epsilon - \epsilon_\theta(\mathbf{x}_t, t) \right\|^2 \right],
\end{align}
which implicitly learns the score function of the data distribution  $\nabla_{\mathbf{x}_t} \log p_\theta(\mathbf{x}_t) \approx -\frac{\beta_t}{\sqrt{1 - \bar{\alpha}_t}} \epsilon_\theta(\mathbf{x}_t, t).$
After training, sampling iterates Langevin-style updates
\begin{align}
\label{eq:sampling}
\mathbf{x}_{t-1} = \frac{1}{\sqrt{\alpha_t}} \left( \mathbf{x}_t - \frac{\beta_t}{\sqrt{1 - \bar{\alpha}_t}} \epsilon_{\theta}(\mathbf{x}_t, t) \right) + \sigma_t \eta,\quad \eta \sim \mathcal{N}(0, \mathbf{I}),
\end{align}
with $\alpha_t, \bar{\alpha}_t, \sigma_t$ defined by the noise schedule.

\paragraph{Compositional Diffusion via Product of Experts}
In many generative settings, it is desirable to combine the knowledge or constraints encoded by multiple models. A principled way to achieve this is through the Product of Experts (PoE) framework \cite{Hinton2002TrainingPO}, which defines a composite distribution as the (normalized) product of individual distributions.
Each model can be viewed as an expert $p_i(\mathbf{x})$, and the query itself may define an additional constraint $p_{\text{query}}(\mathbf{x})$. The PoE then computes a joint distribution $p(\mathbf{x}) \propto p_{\text{query}}(\mathbf{x}) \prod_i p_i(\mathbf{x})$, sharply focusing on latent states consistent with all sources of evidence. This allows for selective, content-addressable generation, retrieving or synthesizing patterns that satisfy all contributing distributions. In the context of diffusion models, this corresponds to composing multiple pretrained models, $p_{\theta_i}(\mathbf{x}_t) \equiv p_i(\mathbf{x}_t)$ into a unified generative process \cite{Liu2022CompositionalVG, Du2023ReduceRR}.

\subsection{Composition of Memory Experts}
Our goal is to augment video world models with the ability to efficiently condition generation on specific past histories, while preserving the diversity and generalization of a pretrained prior. We denote a video sequence of $T$ frames as $\mathbf{x} \in \mathbb{R}^{T \times 3 \times H \times W}$, with spatial resolution $H \times W$. The context $\mathcal{M}$ represents the set of all past frames, which could be of arbitrary length. In most modern video world models, $T$ is fixed and limited by design~\cite{opensora, latte, Yang2024CogVideoXTD, Polyak2024MovieGA}. Our objective is to model the conditional distribution $p(\mathbf{x}\mid\mathcal{M})$ for coherent generation that reflects both recent and long-term context.

Different architectures offer trade-offs between consistency with the immediate past, long-term memory, generation quality, and inference speed. There is no universally optimal solution. Scaling transformers with longer context windows quickly incurs quadratic attention costs and often leads to unstable training \cite{Song2025HistoryGuidedVD}. Sparsified attention or purely state-space models reduce memory usage but lose fidelity and long-range consistency \cite{gu2024mamba, Behrouz2024TitansLT}.  

Motivated by this, we propose to \emph{compose specialized experts}, each responsible for a particular memory role, and fuse them probabilistically at sampling time. Diffusion models naturally support such composition through a PoE formulation, without requiring retraining \cite{Du2023ReduceRR}. Concretely, we assume the past history $\mathcal{M}$ can be decomposed into multiple (possibly overlapping) subsets of frames $\{c_i\}_{i=1}^K$ with $c_i\subset\mathcal{M}$ and $\cup_{i=1}^K c_i = \mathcal{M}$. We assume that a diffusion model $p_i(\mathbf{x})$ has been trained to predict future frames conditioned on the corresponding context $c_i$. To approximate $p(\mathbf{x}|\mathcal{M})$, we then use
\[
p(\mathbf{x}\mid\mathcal{M}) = p(\mathbf{x}\mid c_1,\dots,c_K) \propto \prod_{i=1}^K p_i(\mathbf{x}),
\]
which balances contributions from all experts.


\paragraph{Product of Contrastive Experts.}
The PoE framework amplifies regions where all experts assign high likelihood and suppresses others. While this sharpens consensus, in practice it can also lead to over-confident distributions and reduced diversity, particularly as the number of experts increases. To address this, we introduce a Product of Contrastive  Experts (PoCM) formulation that explicitly factors out spurious distribution modes.  

Formally,  $\{p_i(\mathbf{x})\}_{i=1}^K$ denotes a set of heterogeneous experts. We define their composition as
\begin{align}
\label{eq:general-poe}
p(\mathbf{x}\mid \mathcal{M}) \;\propto\; \prod_{i=1}^{K}
\tilde p_i(\mathbf{x}),\quad\text{where}\quad \tilde p_i(\mathbf{x})\;\propto\; p_i(\mathbf{x})^{\alpha_i}\,
\overline{p}_i(\mathbf{x})^{\,1-\alpha_i}, 
\end{align}
and where 
$\alpha_i >1$ are contrasting coefficients, and $\overline{p}_i(\mathbf{x})$ denotes the expert’s unconditional baseline distribution (\emph{i.e.}, without conditioning on the context $c_i$), and $\tilde p_i$ are the contrastive experts. 
The contrastive product is motivated by the fact that each expert is only an approximation of a true distribution, and, as we show below, PoCE provides a way to suppress undesired spurious modes.

\newcommand{\x}{\mathbf{x}}

\paragraph{Taming spurious modes.}
A first impulse to suppress spurious modes of an approximate expert $p_i$ is to
\emph{temper} it, \emph{i.e.}, use $p_i(\x)^\alpha$ with $\alpha>1$. This indeed downweights lighter modes more than heavier ones.
However, tempering alters not only the \emph{magnitude} of the modes but also their \emph{spread}: for standard kernels (\emph{e.g.}, Gaussians),
\[
\bigl[\mathcal N(\x;\mu,\Sigma)\bigr]^\alpha \;\propto\; \mathcal N\!\bigl(\x;\mu,\tfrac{1}{\alpha}\Sigma\bigr),
\]
so the variance shrinks by a factor $1/\alpha$. Thus, naive tempering narrows kernels and changes local geometry, which reduces diversity when sampling from it.

\paragraph{Contrastive experts.}
To suppress spurious modes while preserving local shapes, we combine each conditional expert $p_i$ with its unconditional baseline $\overline p_i$ via \emph{contrastive mixture}:
for $\alpha_i>1$,
\[
\tilde p_i(\x) \;\propto\; p_i(\x)^{\alpha_i}\,\overline p_i(\x)^{\,1-\alpha_i}.
\]
This operation is log-linear (additive in log density), but, unlike $p_i^\alpha$, does not, in the KDE setting below, tighten kernels; it reweights the existing components and leaves their shapes intact.

\begin{proposition}[KDE reweighting under disjoint support]\label{thm:kde-reweighting}
Assume a KDE for $p_i(\x)=\sum_{k=1}^M \pi^i_k\,h_k(\x)$ and 
$\overline p_i(\x)=\sum_{k=1}^M \omega^i_k\,h_k(\x)$ with $\sum_k \pi^i_k=1$, $\pi^i_k\ge0$,
$\sum_k \omega^i_k=1$, 
$\omega^i_k\ge0$,
and $\int h_k(\x)\,d\x=1$.
Suppose each kernel $h_k$ has limited bandwidth and the supports of $\{h_k\}_{k=1}^M$ are disjoint.
Then the contrastive expert satisfies
\[
\tilde p_i(\x)\;\propto\;\sum_{k=1}^M \bigl(\pi^i_k\bigr)^{\alpha_i}\,\bigl(\omega^i_k\bigr)^{1-\alpha_i}\,h_k(\x).
\]
In particular, if $\omega^i_k=\tfrac1M$ (uniform baseline), the factor $(\omega^i_k)^{1-\alpha_i}$ is constant in $k$ and is absorbed into normalization, yielding
$\;\tilde p_i(\x)\propto \sum_k (\pi^i_k)^{\alpha_i} h_k(\x)$.
\end{proposition}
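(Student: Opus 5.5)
The plan is to evaluate the contrastive expert pointwise, using the disjointness of the kernel supports to collapse each mixture to a single term. Write $S_k=\{\x: h_k(\x)>0\}$ for the support of the $k$-th kernel. By hypothesis the $S_k$ are pairwise disjoint. So every $\x$ either lies in exactly one $S_k$ or lies outside $\bigcup_k S_k$, and I will treat these two cases separately.

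\textbf{Step 1 (inside a support).} Fix $\x\in S_k$. Since $h_j(\x)=0$ for all $j\neq k$, the two mixtures reduce to single terms:
\[
p_i(\x)=\pi^i_k h_k(\x), \qquad \overline p_i(\x)=\omega^i_k h_k(\x).
\]
Substituting into the unnormalized contrastive expert gives
\[
p_i(\x)^{\alpha_i}\,\overline p_i(\x)^{1-\alpha_i}
=(\pi^i_k)^{\alpha_i}(\omega^i_k)^{1-\alpha_i}\,h_k(\x)^{\alpha_i+1-\alpha_i}
=(\pi^i_k)^{\alpha_i}(\omega^i_k)^{1-\alpha_i}\,h_k(\x).
\]
This is the key point: the exponents on $h_k$ sum to one, so the kernel shape is untouched. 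Tempering alone would produce $h_k^{\alpha_i}$ instead. Because the other kernels vanish at $\x$, the same value equals $\sum_j (\pi^i_j)^{\alpha_i}(\omega^i_j)^{1-\alpha_i}h_j(\x)$.

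\textbf{Step 2 (outside all supports).} For $\x\notin\bigcup_k S_k$, both $p_i$ and $\overline p_i$ vanish, and so does the claimed right-hand side. Here I adopt the convention that the unnormalized $\tilde p_i$ is $0$ wherever $p_i=0$. This is needed because $1-\alpha_i<0$ makes $0^{1-\alpha_i}$ ill-defined, and it is the natural reading of ``$\propto$''.

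\textbf{Step 3 (normalization).} Combining the two cases, the unnormalized density equals $\sum_k (\pi^i_k)^{\alpha_i}(\omega^i_k)^{1-\alpha_i}h_k(\x)$ everywhere. Since each $\int h_k=1$, its integral is $Z=\sum_k(\pi^i_k)^{\alpha_i}(\omega^i_k)^{1-\alpha_i}$, and dividing by $Z$ gives the claimed proportionality. For the uniform-baseline case, setting $\omega^i_k=1/M$ makes $(\omega^i_k)^{1-\alpha_i}=M^{\alpha_i-1}$, which is independent of $k$. It factors out of the sum and cancels in $Z$, leaving $\tilde p_i\propto\sum_k(\pi^i_k)^{\alpha_i}h_k$.

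The main obstacle is not algebraic but the degenerate weights. If some $\omega^i_k=0$ while $\pi^i_k>0$, then $(\omega^i_k)^{1-\alpha_i}=\infty$, and both $Z$ and the density are undefined. I would first try to state explicitly the standing assumption $\omega^i_k>0$ whenever $\pi^i_k>0$; this holds automatically in the uniform case. Together with the convention $0^{\alpha_i}\cdot(\text{anything finite})=0$ for components with $\pi^i_k=0$, this ensures $0<Z<\infty$, so the normalization is legitimate.
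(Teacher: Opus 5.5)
Your proposal is correct and follows essentially the same route as the paper's proof: on the support of a single kernel $h_k$, disjointness collapses the two mixtures to $\pi^i_k h_k$ and $\omega^i_k h_k$, the exponents on $h_k$ sum to one, and you then sum over $k$ and renormalize. Your version is more careful than the paper's in three ways: it states exact equality where the paper writes $\approx$ (disjointness actually gives equality), it handles points outside all supports, and it makes explicit that $\omega^i_k>0$ whenever $\pi^i_k>0$ is needed for the normalization constant to be finite.
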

\emph{Proof.}
On the support of $h_k$, the disjointness assumption implies
$p_i(\x)\approx \pi^i_k h_k(\x)$ and $\overline p_i(\x)\approx \omega^i_k h_k(\x)$.
Hence $\tilde p_i(\x)\propto (\pi^i_k)^{\alpha_i}(\omega^i_k)^{1-\alpha_i} h_k(\x)^{\alpha_i+1-\alpha_i}
= (\pi^i_k)^{\alpha_i}(\omega^i_k)^{1-\alpha_i} h_k(\x)$.
Summing over $k$ and renormalizing gives the claim.\qed

\paragraph{Interpretation and trade-offs.}
By Proposition~\ref{thm:kde-reweighting}, the transformation $p_i\mapsto \tilde p_i$ leaves the \emph{kernels} $h_k$ untouched and acts only on the \emph{mixture weights}, via
$\pi^i_k \mapsto \tilde\pi^i_k \propto (\pi^i_k)^{\alpha_i}(\omega^i_k)^{1-\alpha_i}$.
Hence lighter modes are disproportionately downweighted when $\alpha_i>1$.
This suppresses spurious modes without distorting local shapes. The downside is that genuinely valid but \emph{secondary} modes (with smaller weights) are also reduced, which may lower diversity; $\alpha_i$ must therefore balance \emph{artifact removal} vs.\ \emph{diversity preservation}.
The KDE modeling assumptions on $p_i$ and $\overline p_i$ are also valid in very general settings. One can always approximate both distributions with the same basis functions (non-overlapping kernels) but different coefficients.
As further discussed in Sec.~\ref{sec:toy_example}, the main requirement is instead that $p_i$ has larger magnitudes for the valid modes than for the spurious ones compared to $\overline{p}_i$, which should be a natural outcome of the training of these experts. Additional analysis and a simplified illustrative example are provided in Sec.~\ref{sec:toy_example}.

\subsection{Instantiating Multiple Memory Experts}
We instantiate three complementary experts that address different aspects of temporal conditioning. Together, they form a compositional memory system that balances fidelity to recent frames, long-term consistency, and spatial disambiguation.
In practice, we build upon pretrained diffusion models that generate videos from a small number of input frames. For our first memory expert, which we denote as $p_\theta(\mathbf{x}\mid c)$, we use a standard image-to-video (I2V) diffusion model. It excels at producing high-quality frames conditioned on the immediate past $c$ (in the range of 1-3 images).

\paragraph{Short-Term Memory (STM).}
The STM expert is a diffusion model that \emph{directly attends} to a short-term recent context window $c_\text{ST}$ (in the range of 10-100 images). This expert is effective at capturing local dynamics and fine details, though its temporal consistency is limited by the finite context size. We denote it as $p_{\phi}(\mathbf{x}\mid c_\text{ST})$, implemented using an architecture with moderately extended context length, motivated by fast weight learning~\cite{pmlr-v139-schlag21a}. The contrastive expert is the unconditional model $\overline{p}_\phi(\mathbf{x}) = p_{\phi}(\mathbf{x}\mid \varnothing)$. The notation $\varnothing$ indicates the absence of conditioning.

\paragraph{Long-Term Memory (LTM).}
While STM effectively models a limited history, many applications require consistency across hundreds of frames. Scaling the context length of existing models quickly becomes computationally prohibitive, motivating a different strategy. We therefore introduce a separate diffusion model that can be conditioned through two channels: 1) by \emph{stores episodic knowledge in its weights} $\psi$ via test-time finetuning on the long-term history $c_\text{LT}$ (in the range of 100-1000 images) and 2) through the standard context conditioning. 
To clarify the notation, let $p_{\psi(c^\prime)}(\mathbf{x}\mid c)$ denote a base diffusion model conditioned on a context $c$ and with weights conditioned on a separate context $c^\prime$. 
Using $\varnothing$ as one of the contexts indicates that no conditioning has been provided or used to finetune the weights $\psi$.
We then introduce $p_{\psi(c_\text{LT})}(\mathbf{x}\mid c)$, where the parameters $\psi(c_\text{LT})$ incorporate long-term context through finetuning on $c_\text{LT}$ using the diffusion loss (\cref{eq:loss}). The contrastive expert is then the model without any context $\overline{p}_\psi(\mathbf{x}) = p_{\psi(\varnothing)}(\mathbf{x}\mid \varnothing)$, where $p_{\psi(\varnothing)} = p_\psi$.

A key challenge is catastrophic forgetting in the adapted model. While explicit regularization strategies such as KL penalties between base and finetuned models are possible, we find that finetuning a set of LoRA adapters~\cite{Hu2021LoRALA} provides sufficient implicit regularization. This approach avoids overfitting to the past, reduces computational cost, and preserves the generalization capacity of the original model.

\paragraph{Spatial Long-Term Memory (SLTM).}
LTM can be viewed as an associative memory that retrieves long-range knowledge conditioned on immediate visual cues $c$~\cite{Schaeffer2024BridgingAM}. However, when cues are ambiguous (\emph{e.g.}, looping paths or visually similar but distinct locations), additional spatial priors are essential.  

To address this, we extend the context $\mathcal{M}$ with auxiliary spatial signals $\mathcal{S}$ (\emph{e.g.}, camera poses or point maps) extracted from the history $\mathcal{M}$. We define a spatial prior $p_{\lambda}(\mathbf{x}\mid \mathcal{S})$, where $\mathcal{S} = \text{Enc}_\lambda(\mathcal{M})$ encodes the memory into a spatial representation. Here, $\text{Enc}_\lambda(\cdot)$ may correspond to a structure-from-motion algorithm (\emph{e.g.}, SLAM \cite{Smith1986EstimatingUS}) or a learned encoder \cite{Wang2025VGGTVG}. This spatial prior improves long-range consistency by disambiguating locations and reducing drift in repetitive or visually ambiguous environments. The contrastive expert, as in STM, is obtained by dropping the spatial prior: $p_{\lambda}(\mathbf{x}\mid \varnothing)$.

\paragraph{Composition of  Memory Experts.}
We now combine the individual experts into a unified distribution.  
Specializing~\cref{eq:general-poe} to the STM $p_{\phi}$, LTM $p_{\psi}$, and SLTM $p_{\lambda}$, together with the pretrained prior $p_{\theta}$, we obtain
\begin{align}
\label{eq:three-expert}
p_{\text{CoME}}(\mathbf{x}\mid c, c_\text{ST}, c_\text{LT}, \mathcal{S})
  & \;\propto\; 
\Big[p_{\theta}(\mathbf{x}\mid \varnothing)^{1-\alpha_0}
     p_{\theta}(\mathbf{x}\mid c)^{\alpha_0}\Big] 
\Big[p_{\phi}(\mathbf{x}\mid \varnothing)^{1-\alpha_1}
     p_{\phi}(\mathbf{x}\mid c_\text{ST})^{\alpha_1}\Big]\notag \\
&\times
\Big[p_{\psi(\varnothing)}(\mathbf{x}\mid c)^{1-\alpha_2}
     p_{\psi(c_\text{LT})}(\mathbf{x}\mid c)^{\alpha_2}\Big]
\Big[p_{\lambda}(\mathbf{x}\mid \varnothing)^{1-\alpha_3}
     p_{\lambda}(\mathbf{x}\mid \mathcal{S})^{\alpha_3}\Big].
\end{align}
\Cref{eq:three-expert} shows the product-of-experts formulation where each memory model is a contrastive expert. The coefficients $\alpha_i \geq 1$ balance unconditional and conditional contributions, ensuring that recent context, long-term consistency, and spatial priors are integrated in a principled manner. For the integration of this approach into diffusion models, we refer to Sec.~\ref{sec:sampling}.

\section{Experiments}
\label{sec:experiments}
In this section, we evaluate our proposed method through comparisons with baselines on synthetic  and real-world datasets. We describe the datasets, baseline models, and evaluation metrics before presenting quantitative and qualitative results. Additional details, including implementation settings, computational cost analysis, extended ablation studies, and further visualizations, are provided in the appendix. 

\noindent\textbf{Datasets.}  
We assess \methodName (\methodNameAcr) on synthetic and real-world datasets. 
\textit{Memory Maze} \cite{memory_maze} consists of 30k offline trajectories (1k frames each) of agents navigating 3D mazes, with absolute and relative position signals, making it suitable for evaluating long-term memory learning.  
\textit{RE10K} \cite{Zhou2018StereoM} provides indoor scenes with camera pose annotations and serves as a challenging real-world benchmark.  
\textit{RECON} \cite{recon} contains over 5k trajectories of real-world outdoor navigation and provides absolute camera trajectories, making it suitable for navigation planning tasks. 

\noindent\textbf{Evaluation Metrics.}  
For high-fidelity prediction tasks (\emph{i.e.}, low uncertainty), we report frame-wise Learned Perceptual Image Patch Similarity (LPIPS) \cite{Zhang2018TheUE}, Structural Similarity Index (SSIM), and Peak Signal-to-Noise Ratio (PSNR). For navigation planning, we evaluate trajectory alignment using Absolute Trajectory Error (ATE) and Relative Pose Error (RPE), where ATE measures the global deviation of the predicted trajectory from ground truth and RPE captures the local consistency of relative motion.

\subsection{Consistency with Past Observed Frames}

\noindent\textbf{Baseline Models.}  
Our primary baseline is a Diffusion Transformer (DiT) \cite{Peebles2022ScalableDM}, also following prior work \cite{Chen2024DiffusionFN, Song2025HistoryGuidedVD}. The model uses three context frames with relative positional information to predict the next seventeen frames.  

We further evaluate specialized memory architectures. The short-term memory (STM) expert is 
a DiT(-S) with sliding window attention chunk size of 20, with 2 full attention layers at the 2 and 6 layers, taking 33 frames as conditioning and predicting the next 17 frames. The long-term memory (LTM) module adapts a pretrained DiT with LoRA finetuning. The spatial long-term memory (SLTM) variant doubles the patch size in the input projection and conditions on absolute camera poses. To align relative trajectories with absolute coordinates, we introduce an auxiliary encoder that maps relative positions and current clean frame estimates into absolute camera positions.  

Although our approach in principle allows training-free composition of different pretrained models, we still compare with specialized architectures that represent memory-oriented alternatives:  
(i) a full-attention transformer as the absolute baseline,  
(ii) chunked transformer inference with interleaved Mamba blocks \cite{gu2024mamba} for SSM-style memory, and  
(iii) sliding-window attention with interleaved full-attention blocks.  
All baselines have a comparable parameter budget and are trained for $150$k steps.  

We report quantitative results across SSIM, PSNR, and LPIPS (Table~\ref{tab:general_results}) on Memory Maze. Both LTM and SLTM are finetuned with two gradient steps per context frame. Each additional memory component consistently improves temporal consistency over the base model. In particular, augmenting with long-term memory, and especially combining STM and LTM, yields significant gains in consistency, beating all provided baselines. 

\begin{figure}[t]
\centering
\begin{minipage}{0.48\textwidth}
    \centering
    {\small
    \captionof{table}{Comparison of different memory configurations on reconstruction metrics with two steps per frame. Best values are \colorbox{second}{highlighted}.}
    \begin{tabular}{lccc}
    \toprule
    \textbf{Method} & \textbf{LPIPS} $\downarrow$ & \textbf{SSIM} $\uparrow$ & \textbf{PSNR} $\uparrow$ \\
    \midrule
    Base            & 0.209 & 0.771 & 19.16 \\
    + STM           & 0.156 & 0.820 & 21.29 \\
    + LTM           & 0.171 & 0.805 & 19.98 \\
    + SLTM          & 0.150 & 0.833 & 20.65 \\
    + STM+LTM       & 0.114 & 0.862 & 22.32 \\
    \rowcolor{second} 
    \methodNameAcr             & \colorbox{second}{0.097} & \colorbox{second}{0.892} & \colorbox{second}{23.07} \\
    \midrule
    Sliding         & 0.183 & 0.753 & 19.02 \\
    SSM             & 0.158 & 0.828 & 20.62 \\
    Full            & 0.113 & 0.859 & 22.78 \\
    \bottomrule
    \end{tabular}
    \label{tab:general_results}
    }
\end{minipage}
\begin{minipage}{0.48\textwidth}
    \centering
    \includegraphics[width=\linewidth]{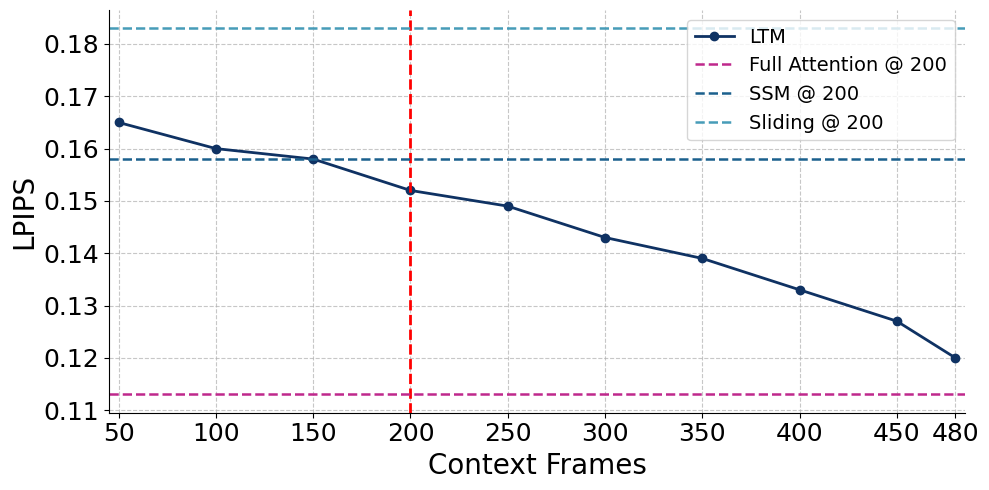}
    \caption{(LPIPS $\downarrow$) as a function of context length on the Memory Maze dataset. Longer contexts lead to more faithful reconstructions, with no saturation observed up to 480 frames.}
    \label{fig:lpips_context}
\end{minipage}\hfill
\end{figure}

Training cost is also noteworthy. Full-attention training required roughly $60\times$ the compute of STM and $12\times$ that of the base model, with also more than $3 \times$ the amount of training steps to converge with the same base settings, see Section \ref{sec:exp_details}. Moreover, full attention with the standard diffusion regime was unstable, requiring a modified training scheme for convergence \cite{Song2025HistoryGuidedVD}. In contrast, our memory composition approach achieves comparable or better consistency at a fraction of the training and inference cost, demonstrating the efficiency of leveraging specialized memory experts over brute-force scaling.  

\noindent\textbf{Effect of Context Length.}  
We further investigate the effectiveness of LTM by varying the number of context frames used for adaptation, ranging from $50$ to $480$ with 3 update steps per frame. Increasing the context length consistently improves perceptual quality, as measured by LPIPS (Figure~\ref{fig:lpips_context}). Longer contexts allow the model to leverage more temporal information and generate reconstructions that are more faithful to the ground truth. Notably, consistency continues to improve up to $500$ frames, with no saturation observed (limited by dataset length).  These findings highlight the complementary roles of STM and LTM in balancing short- and long-range temporal dependencies.  

\noindent\textbf{Effect on Planning.}  
Consistency is particularly critical for planning tasks. As a testbed for visual planning, we adopt the NavigationWM framework \cite{Bar2024NavigationWM}, where the model is applied to goal-conditioned navigation. Given past observations and a goal image, candidate sequences of trajectories are scored by executing NWM rollouts of length eight and computing the LPIPS distance between the final frame and the goal image, averaged over three runs.  
Our STM is implemented as a finetuned DiT-B on twelve context frames, while the LTM is instantiated by adapting a frozen pretrained world model with LoRA finetuning. 

\begin{table}[t]
\caption{Planning accuracy on the RECON benchmark (100 sampled trajectories). 
We report Absolute Trajectory Error (ATE) and Relative Pose Error (RPE). Comparison of \methodNameAcr with GNM~\cite{Shah2022GNMAG}, NOMAD~\cite{Sridhar2023NoMaDGM} and NWM~\cite{Bar2024NavigationWM}.}
\centering
\resizebox{\linewidth}{!}{%
\small
\begin{tabular}{lccc|ccccc}
\toprule
 & \textbf{GNM} & \textbf{NOMAD} & \textbf{NWM} & \colorbox{second}{\textbf{\methodNameAcr}} & \textbf{STM} & \textbf{LTM} & \textbf{NWM+STM} & \textbf{NWM+LTM}  \\
\midrule
ATE (↓) & 1.87 & 1.93 & 1.13 & \colorbox{second}{0.96}& 1.05 & 1.10 & 0.98 & 1.07  \\ 
RPE (↓) & 0.73 & 0.52 & 0.35 & \colorbox{second}{0.28}& 0.32 & 0.32 & 0.30 & 0.33  \\
\bottomrule
\end{tabular}%
}
\label{tab:planning}
\end{table}

As shown in Table~\ref{tab:planning}, adding memory components consistently improves planning accuracy. Even lightweight additions such as a smaller STM yield tangible gains, supporting the principle that structured memory enhances temporal consistency and thereby benefits downstream tasks such as navigation.

\subsection{Consistency with a Continuous Stream of Observations}

\noindent\textbf{Streaming Evaluation.}
In reinforcement learning settings, agents receive a continuous stream of observations and must remain consistent with them over time. To evaluate whether our memory composition extends beyond fixed contexts at the beginning of generation, we design the following experiment: at each step, the model memorizes the last 50 frames for 25 gradient updates, predicts the next frames, and then appends the ground-truth frames back into the context. This procedure is repeated for 11 iterations. 
\begin{figure}[t]
    \centering
    \begin{minipage}[t]{0.48\linewidth}
        \vspace{0.01em} 
        \centering
        \includegraphics[width=\linewidth]{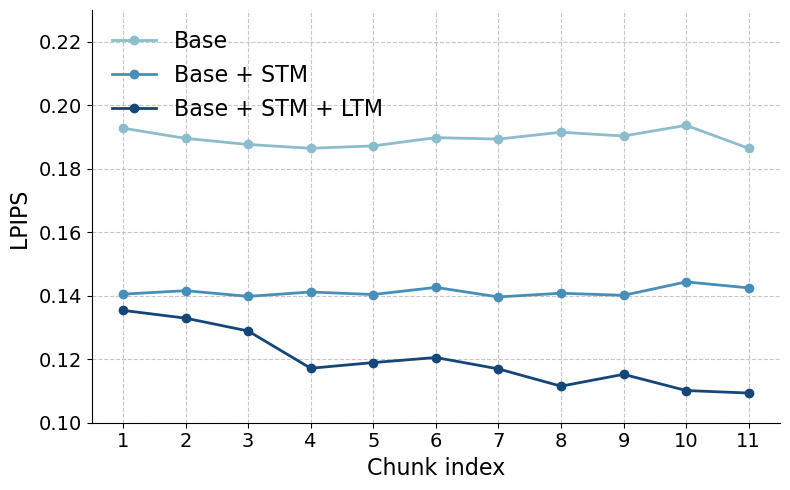}
        \caption{Evaluation on Memory Maze with 10 chunks of 20 frames each. 
        Our method incrementally memorizes to maintain consistency.}
        \label{fig:long_gt_experiment}
    \end{minipage}
    \hfill
    \begin{minipage}[t]{0.48\linewidth}
        \centering
        \captionof{table}{Comparison of evaluation metrics across different methods and rollout lengths on \textbf{RealEstate10K}. 
        PSNR and SSIM (↑) indicate higher is better; LPIPS (↓) lower is better. 
        \colorbox{second}{Highlighting} marks the best results.}
        \begin{tabular}{lccc}
            \toprule
            \textbf{Method}  & \textbf{LPIPS} ↓ & \textbf{PSNR} ↑ & \textbf{SSIM} ↑ \\
            \midrule
            Base    & 0.405 & 19.8 & 0.794 \\
            \cellcolor{second}\methodNameAcr    & \cellcolor{second}0.359 & \cellcolor{second}21.3 & \cellcolor{second}0.83 \\
            HG-v    & 0.414 & 19.2 & 0.764 \\
            HG-t    & 0.400 & 19.7 & 0.788 \\
            \bottomrule
        \end{tabular}
        \label{tab:recall_results}
    \end{minipage}%
\end{figure}
As shown in Figure~\ref{fig:long_gt_experiment}, on Memory Maze the STM already improves consistency across iterations by reinforcing short-term dynamics. Adding the LTM yields a compounding effect: consistency continues to improve as new observations are integrated, demonstrating that the LTM successfully stores additional information in its weights and leverages it in subsequent generations. This highlights the importance of online memory adaptation for handling continuous observation streams.

\noindent\textbf{Recall Ability.}
We evaluate the recall capabilities of \methodNameAcr on the RE10K dataset using a large pretrained network~\cite{Song2025HistoryGuidedVD}.  In this experiment, the model generates frames and memorizes them, and then follows the reversed trajectory to reconstruct previously visited states. We test sequences of 3 forward rollouts, each followed by 3 backward rollouts. To measure recall accuracy, we compare the frames of the backward rollout with the forward rollout using LPIPS for perceptual similarity. 
As reported in Table~\ref{tab:recall_results}, \methodNameAcr consistently outperforms baseline methods in semantic recall. In Figure~\ref{fig:r10_vis} we provide qualitative results,  where we see that \methodNameAcr accurately is able to retrieve the exact frames, preserving a consistent scene view. In contrast, the baseline model without memory generates plausible but inconsistent frames, failing to recall the original state.
\begin{figure}[h]
    \centering
    \includegraphics[width=0.95\linewidth]{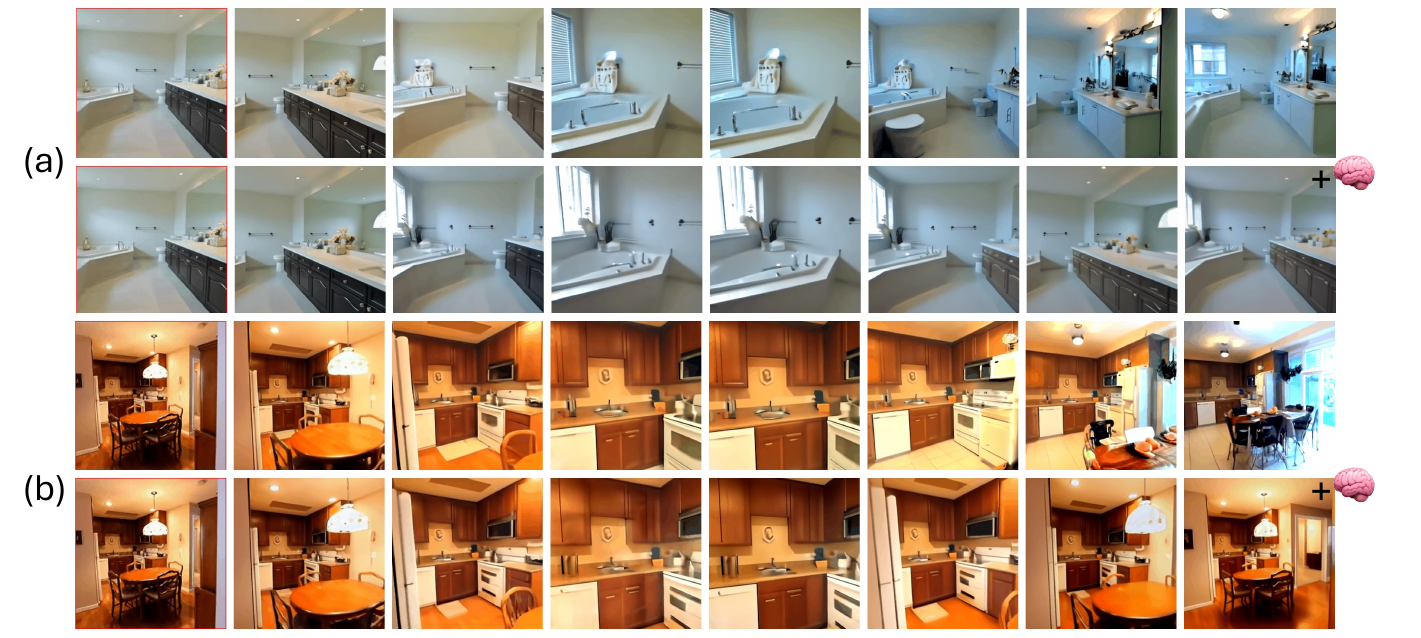}
    \caption{\textbf{Qualitative results on the RealEstate10K dataset.} 
    We generate six forward rollouts and then reverse the camera trajectory. 
    Unlike the base model without long-term memory, \methodName correctly recalls the initial frame in both examples, ensuring scene consistency. (The first frame and last frame of a sequence should be equal.)}
    \label{fig:r10_vis}
\end{figure}

\noindent\textbf{Notes on Compute.} 
In the online setting, we perform two memorization steps per frame, resulting in approximately 100 steps in total. When using the LTM with LoRA at rank $r=64$, this introduces about a $4\times$ compute overhead due to memorization, in addition to the overhead from the added experts. With $r=16$, the overhead decreases to roughly a $2\times$ increase in sampling time. Further details are provided in Section~\ref{sec:compute}.


\subsection{Ablation}

\noindent\textbf{Contrastive Expert Composition.}  
We first ablate the role of the contrastive formulation in Table~\ref{tab:ablation_contrastive_experts}. Applying the contrastive PoE approach improves the performance of each individual memory expert with the Memory Maze setting from Section \ref{sec:experiments}. We can see that the naive product-of-experts quickly collapses or cancels out consistency gains, whereas our contrastive composition is essential for stable improvements. Without contrastive weighting, stacking multiple experts fails to outperform the base model and in some cases even reduces consistency.
\begin{table}[t]
\centering
\footnotesize
\begin{minipage}[t]{0.48\linewidth}
\centering
\caption{LPIPS results with and without the addition of the contrastive experts.}
\begin{tabular}{lcc}
\toprule
\textbf{Model} & \textbf{w/o Contrastive} & \textbf{w/ Contrastive} \\
\midrule
Base     & 0.203   & \colorbox{second}{0.200} \\
STM      & 0.175   & \colorbox{second}{0.156} \\
LTM      & 0.188   & \colorbox{second}{0.171} \\
SLTM     & 0.178   & \colorbox{second}{0.150} \\
STM+LTM  & 0.170   & \colorbox{second}{0.114} \\
\bottomrule
All     & 0.192   & \colorbox{second}{0.097} \\
\bottomrule
\end{tabular}
\label{tab:ablation_contrastive_experts}
\end{minipage}%
\hfill
\begin{minipage}[t]{0.48\linewidth}
\centering
\caption{Effect of LTM adaptation capacity and context length given in LPIPS. }
\begin{tabular}{lcccc}
\toprule
 & \multicolumn{3}{c}{\textbf{Context Frames}} & \\
\cmidrule(lr){2-4}
\textbf{Rank} & \textbf{50} & \textbf{150} & \textbf{450} & \textbf{Params} \\

\midrule
8     & 0.193 & 0.161 & 0.146 & \colorbox{second}{1\%} \\
32    & 0.175 & 0.169 & 0.128 & 4\% \\
64    & 0.161 & 0.158 & 0.124 & 7\% \\
256   & \colorbox{second}{0.162} & \colorbox{second}{0.142} & \colorbox{second}{0.118} & 25\% \\
Full  & 0.221 & 0.188 & 0.125 & 100\% \\
\bottomrule
\end{tabular}
\label{tab:ablation_ltm_architecture}
\end{minipage}
\end{table}

\noindent\textbf{LTM Architecture and Context Frames.}  
We next analyze the effect of LTM adaptation capacity and context length in Table~\ref{tab:ablation_ltm_architecture}. Increasing LoRA rank consistently improves performance, while full fine-tuning tends to overfit unless the available context is highly diverse. This suggests that LoRA provides implicit regularization, enabling sufficient adaptation while retaining general priors. Longer contexts further amplify these benefits, with no saturation observed even at 450 frames, underscoring the scalability of our long-term memory design.  

\section{Conclusions}
\label{sec:conclusion}
In this work, we explored how different memory mechanisms can be combined to improve the consistency and reliability of video world models.
First, we demonstrated that models with complementary strengths in memory modeling can be effectively combined to yield more consistent generations. We showed analytically and experimentally that our proposed \emph{Product of Contrastive Experts} can successfully eliminate spurious modes without introducing side-effects as in a naive Product of Experts formulation.
We further introduced a novel long-term memory module that allows to  store efficiently information across more than 500 frames, enabling substantially improved temporal consistency. These contributions were validated across both synthetic and real-world datasets, as well as in reinforcement learning navigation planning tasks, where reliable memory is crucial for effective decision-making.
Looking ahead, future work will explore extending long-term memory mechanisms to explicitly capture temporal dependencies, incorporating richer conditioning strategies, and developing more exhaustive forms of spatial conditioning to further enhance generative fidelity and downstream task performance.

\section*{Acknowledgements}
This work was supported by a grant from the Swiss National Supercomputing Centre (CSCS) under project ID a144 on Alps as part of the Swiss AI Initiative, and by SNSF Grant 10001278. Additional computations were carried out on UBELIX (https://www.id.unibe.ch/hpc), the high-performance computing cluster at the University of Bern.

\bibliography{iclr2026_conference}
\bibliographystyle{iclr2026_conference}

\appendix

\appendix

\section{Further Analysis for Product of Contrastive Experts}
\label{sec:toy_example}
\paragraph{Non-Uniform Weights and Mode Selectivity.}
If the weights $\omega^i$ are not uniform, Proposition~\ref{thm:kde-reweighting} yields
\[
\tilde\pi^i_k \;\propto\; (\pi^i_k)^{\alpha_i}\,(\omega^i_k)^{1-\alpha_i}
\;=\; (\omega^i_k)\,\Big(\frac{\pi^i_k}{\omega^i_k}\Big)^{\alpha_i},
\]
where $\tilde \pi^i_k$ are the KDE weights for $q_i$, \emph{i.e.}, $\tilde p_i(\mathbf{x}) = \sum_{k=1}^M \tilde\pi^i_k h_k(\mathbf{x})$.
Let $\rho_k \doteq  \pi^i_k/\omega^i_k$ denote the \emph{contrast ratio} of conditional to baseline weights.
Then $\tilde\pi^i_k/\omega^i_k \propto \rho_k^{\alpha_i}$, which is (strictly) increasing in $\rho_k$ for $\alpha_i>1$.
Therefore:
(i) modes with $\rho_k>1$ (more emphasized by the conditional than the baseline) are amplified relative to baseline,
(ii) modes with $\rho_k<1$ are suppressed,
and the degree of separation grows with $\alpha_i$.

\paragraph{Spurious vs.\ Secondary Modes.}
To reliably remove \emph{spurious} modes while retaining \emph{secondary but genuine} modes, one needs a margin in the contrast ratios:
\[
\exists\,\tau>1 \quad \text{s.t.}\quad
\rho_k \ge \tau \ \text{for genuine modes} \quad\text{and}\quad
\rho_k \le \tau^{-1} \ \text{for spurious modes}.
\]
Under such a gap, choosing $\alpha_i$ so that $\tau^{\alpha_i}$ is large cleanly separates the two groups after renormalization.
If secondary modes have $\rho_k$ only slightly above $1$ (or even below), aggressive $\alpha_i$ will attenuate them too; in that case one can
(a) keep $\alpha_i$ close to $1$, or
(b) use a baseline model with  $\omega^i$s that better reflect “background” mass (\emph{e.g.}, flatten clearly spurious regions). 

\paragraph{Toy Example and Visualization}
To illustrate the effect of the Mixture of Contrastive Experts, we construct a toy example where each expert $p_k(x)$ is modeled as a mixture of $M$ Gaussians,
\begin{equation}
p_k(x) = \sum_{m=1}^M w_{k,m}\,\mathcal{N}(x \mid \mu_{k,m}, \sigma_{k,m}^2),
\end{equation}
with weights chosen either to decay geometrically (emphasizing a few dominant modes) or to equalize peak heights across all components.  
The latter case defines the contrastive distributions $\bar p_k(x)$.

We then study their interaction by composing multiple experts using different rules:
\begin{itemize}
    \item the \emph{product of experts (PoE)} $\prod_k p_k(x)$, which amplifies consensus and yields various spurious likelihood modes;
    \item a \emph{product of contrastive experts (PoCE)} $\prod_k p_k(x)^\alpha \bar p_k(x)^{1-\alpha}$, which balances experts and their contrastive counterparts.
\end{itemize}

All products are renormalized to define valid probability densities.  
The results highlight that standard PoE compositions concentrate probability mass around shared modes, while contrastive and interpolated forms provide control over the degree of sharpening and mitigate trivial collapse.

\begin{figure}[t]
    \centering
\includegraphics[width=0.98\linewidth]{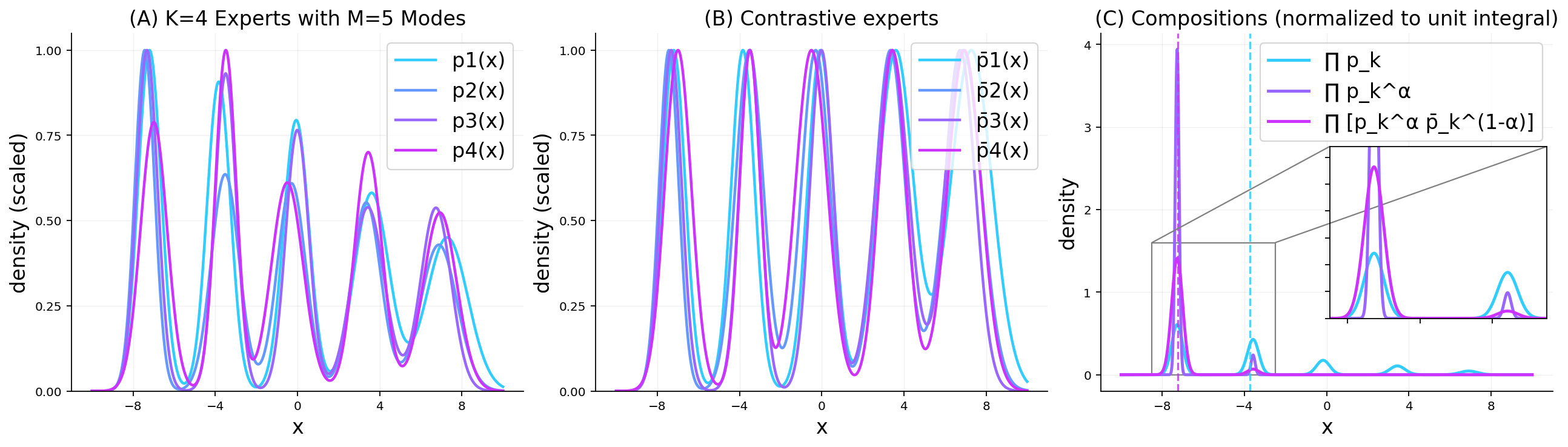}
    \caption{\textbf{Illustration of Mixture of Contrastive Experts.}  
    (a) Individual experts, modeled as Gaussian mixtures, modes are decreasing geometrically (from left to right).  
    (b) Individual contrastive experts, with uniform modes.  
    (c) Product of Experts, Exponentially scales PoE,  Product of Contrastive Experts.PoCE suppresses inconsistent modes (e.g., the four rightmost peaks) while preserving the dominant left kernel.  
    The vertical line indicates the center of probability mass for the PoE and PoCE.}
    
    \label{fig:toy_example}
\end{figure}

As shown in Figure~\ref{fig:toy_example}, the standard PoE yields nearly equal likelihood for the two leftmost modes, centering the probability mass close to the second peak.  
In contrast, the contrastive composition removes the four right-hand modes almost entirely, while maintaining the structure of the dominant left kernel.  
This demonstrates how contrastive weighting prevents spurious consensus and yields sharper, more interpretable mixtures.

\section{Sampling and Test-Time Training}
\label{sec:sampling}

\paragraph{Sampling.}
Given the compositional formulation in \cref{eq:three-expert}, sampling proceeds by replacing the single-model score function in the reverse diffusion update (\cref{eq:sampling}) with the joint score of all experts.  
At noise level $t$, we compute
\begin{align}
\label{eq:composite-score}
\nabla_{\mathbf{x}_t}\log p_{\text{CoM}}(\mathbf{x}_t)
= \sum_{k} \Big[ \alpha_k \nabla_{\mathbf{x}_t} \log p_k(\mathbf{x}_t)
+ (1-\alpha_k)\nabla_{\mathbf{x}_t}\log \overline{p}_k(\mathbf{x}_t) \Big],
\end{align}
where the unconditional counterpart of each expert acts as a contrastive baseline.  
This additive score naturally integrates into the denoising step (\cref{eq:sampling}) by substituting $\epsilon_\theta$ with the weighted combination of expert noise predictors.

\paragraph{Test-Time Adaptation of LTM.}
Among the experts, the Long-Term Memory (LTM) uniquely adapts online to the current episode.  
At the beginning of a rollout, the base diffusion prior $p_{\psi(\varnothing)}$ is finetuned on the long-term context $c_\text{LT}$, yielding $p_{\psi(c_\text{LT})}$.  
We perform $n_{\text{mem}}$ gradient steps on mini-batches sampled from the memory $\mathcal{M}$ using the diffusion loss (\cref{eq:loss}).  
To mitigate catastrophic forgetting, updates are restricted to low-rank adapters (LoRA layers), while the backbone remains frozen.  
This strategy injects episodic knowledge into the weights at low computational cost without degrading the pretrained prior.

After adaptation, sampling resumes with the composite score (\cref{eq:composite-score}).  
Newly observed clips are appended to the memory $\mathcal{M}$, and the LTM may be updated again for longer rollouts.  
This establishes a recurrent loop: the STM ensures short-term fidelity, the LTM maintains evolving long-term consistency, and the spatial prior resolves global ambiguities. Together, these components yield a stable online generation process grounded in both immediate dynamics and extended history.

\section{Datasets}

\paragraph{Memory Maze} 
The Memory Maze environment~\cite{memory_maze} provides trajectories of an agent navigating through a 3D maze. However, it also provides both absolute and relative positional information of the agent within the maze. Designed for reinforcement learning, the agent’s task is to navigate toward colored balls placed in the maze, with the correct color cue provided via a colored border around the frame. Relative position is represented by 2D displacement vectors and two rotation angles, while absolute position includes 2D coordinates and an orientation angle (defined with respect to the maze center). The environment supports four maze sizes. In our experiments, we use an offline dataset comprising 30,000 trajectories of 1,000 frames each, collected using a stochastic navigation policy targeting random locations under action noise. This policy ensures diverse trajectory paths that often form loops, facilitating long-term memory learning.

\paragraph{RECON}
The RECON \cite{recon} dataset contains over 5k trajectories of real-world outdoor navigation with a Clearpath Jackal robot, providing absolute camera trajectories along with RGB, stereo, thermal, LiDAR, GPS, and IMU data, making it a comprehensive benchmark for studying latent goal models and topological memory in navigation planning.

\paragraph{RealEstate10K}
RealEstate10K~\cite{Zhou2018StereoM} is a video dataset captured in real-world real estate environments, annotated with accurate 3D camera poses and trajectories. This rich spatial information supports training video models with explicit control over camera motion and scene geometry. The dataset is particularly valuable for memory and consistency evaluations, as it allows testing whether models can retain and recall visual information about different rooms or viewpoints within the same property. Unlike synthetic datasets, RealEstate10K provides real-world complexity and diversity, helping to validate model generalization to non-simulated settings. In our experiments, we use videos resized to $256 \times 256$.

\paragraph{DeepMind Lab (DMLab-40K)} 
The DMLab-40K dataset consists of $64 \times 64$ resolution videos depicting agent trajectories in a 3D maze environment~\cite{Yan2022TemporallyCT, Beattie2016DeepMindL}. The DeepMind Lab simulator procedurally generates random mazes with varied floor and wall textures. Each trajectory is 300 frames long and involves an agent traversing $7 \times 7$ mazes by selecting random target points and navigating to them using the shortest path. In total, 40,000 such action-conditioned videos are provided. The maze layout is static throughout each episode, making the environment particularly suitable for evaluating recall capabilities. 

\paragraph{Minecraft-200K}
Minecraft-200K~\cite{Yan2022TemporallyCT} is a large-scale dataset comprising 200,000 videos of Minecraft gameplay, where the player navigates using three discrete actions: \emph{forward}, \emph{left}, and \emph{right}. Each video is 300 frames long at a resolution of $128 \times 128$ pixels, with corresponding action labels for each frame. The gameplay occurs in Minecraft’s marsh biome, a procedurally generated 3D world featuring complex terrain such as hills, forests, rivers, and lakes. The player alternates between walking forward for a random number of steps and rotating randomly, causing parts of the environment to disappear and reappear in the field of view. This dynamic makes the dataset particularly well-suited for evaluating memory and temporal consistency, especially under changing viewpoints. Following the protocol of~\cite{Song2025HistoryGuidedVD}, we upsample the original frames from $128 \times 128$ to $256 \times 256$ resolution to enable higher-quality video generation.

\paragraph{Memory Cards Dataset} 
To provide a simplified and interpretable testbed for discrete recall, we introduce the \emph{Memory Cards} dataset. Each sample is a $4 \times 4$ grid containing eight unique objects. The player can move up, down, left, or right, and may cover or uncover tiles. The object layout remains static during an episode. A competent world model should be able to reconstruct the complete object configuration after observing all tiles at least once. We generate 100,000 sequences of 250 frames using random actions from the \texttt{pygame} library~\cite{pygame}. Actions are sampled such that covering/uncovering occurs with 0.4 probability, while each directional movement (up/down/left/right) occurs with 0.15 probability, promoting frequent tile changes. The dataset is split into 90\% training and 10\% test sets. Additionally, we provide a specialized test set where sequences start either fully covered or uncovered and then transition through zig-zag uncovering/covering patterns.

\section{Additional Experiments}
\subsection{Memory-Cards: Discrete recall evaluation}
This experiment evaluates the model's ability to recall individual objects that have been concealed or occluded over time. We use the discrete \textit{Memory Cards} dataset, specifically constructed for this purpose.

We train a standard diffusion U-Net~\cite{Nichol2021ImprovedDD} for 20k steps, as described in Section~\ref{sec:exp_details}. The evaluation input consists of video sequences showing a $4\times4$ grid of cards, which are sequentially revealed and concealed line by line. To measure object-level recall, we compute the tile-wise mean squared error (MSE) between the predicted and ground-truth frames at the end of each sequence. A grid tile is considered correctly reconstructed if its MSE falls below a threshold of $2\times10^{-5}$. Our method achieves a recall accuracy of 79.2\% over a sample of 50 sequences with 20 memorization steps. In contrast, the standard diffusion (SD) baseline performs at chance level, achieving only 13\%. See Section~\ref{sec:visual} for qualitative results.

\subsection{DMLab: Deterministic environment analysis}
We replicate the Memory Maze experiment in DMLab, a fixed environment that allows for deterministic evaluation of recall capabilities. The agent initially explores the maze to collect observations, after which we generate 50 future frames and compute LPIPS scores against ground-truth frames. We use a UViT model, as described in Section~\ref{sec:exp_details}.

The baseline SD model achieves an LPIPS of 0.558, while our method improves this to 0.456, marking an 18\% improvement. Notably, no clear overfitting ceiling is observed: increasing the number of memorization steps from 1000 to 2000 yields a modest further improvement of 0.028 in LPIPS. This suggests that, for deterministic tasks, performance may continue to benefit from additional memorization steps. Visual examples are provided in Section~\ref{sec:visual}.

\subsection{Ablation: Langevin correction steps}
Motivated by techniques from \cite{Du2023ReduceRR} and predictor-corrector sampling in diffusion models~\cite{Bradley2024ClassifierFreeGI}, we examine whether inserting Langevin dynamics steps during sampling improves generation quality and memory recall. 

We conduct this ablation on the Memory Maze dataset, using 200 context frames and generating 17 future frames. We vary both the number and scale of Langevin steps, testing fixed step sizes as well as step sizes proportional to the noise schedule $\beta_t$. Across all configurations, we observe no significant improvement over the baseline, indicating that these modifications do not provide a favorable tradeoff between computational cost and generation quality. Furthermore using unadjusted Hamiltonian Monte Carlo, did not yield better results, with the same settings as detailed in \cite{Du2023ReduceRR}. 

\subsection{Experiment on Minecraft (Oasis-style setting).}
CoME is intended as a general framework rather than a standalone world model: whenever diffusion-based world models are used, CoME can be layered on top. Beyond the diverse architectures evaluated in Tables~2--3, we additionally apply CoME to an Oasis/Diamond-style setup on the simpler Minecraft Marsh dataset. We train a diffusion model that takes 3 input context frames and predicts 17 future frames. The STM variant again uses doubled patch size for efficiency and 33 context frames to predict 17 future frames, while all other settings follow Section~\ref{sec:exp_details}.

\begin{table}[h]
\centering
\caption{\textbf{Minecraft Marsh results (3$\rightarrow$17 prediction).} CoME consistently improves perceptual and reconstruction metrics over the base model and single-expert variants.}
\label{tab:minecraft_marsh}
\begin{tabular}{lccc}
\toprule
\textbf{Model} & \textbf{LPIPS} $\downarrow$ & \textbf{SSIM} $\uparrow$ & \textbf{PSNR} $\uparrow$ \\
\midrule
Base   & 0.408 & 0.450 & 16.19 \\
+STM   & 0.375 & 0.549 & 17.03 \\
+LTM   & 0.389 & 0.552 & 17.28 \\
CoME   & 0.369 & 0.574 & 17.78 \\
\bottomrule
\end{tabular}
\end{table}

These results demonstrate that CoME transfers effectively to Oasis-style diffusion world models in Minecraft, yielding consistent gains across perceptual similarity (LPIPS) and reconstruction metrics (SSIM, PSNR).

\section{Implementation Details}
This section outlines the experimental configurations used across all datasets to ensure reproducibility of \methodName.
For consistency, all memorization steps, i.e., the gradient updates on past trajectories, are performed using the AdamW optimizer~\cite{loshchilov2018decoupled} with the same learning rate and hyperparameters as during training. Unlike prior work~\cite{hong2025slowfastvgen}, we retain the conditioning during memorization, mirroring the training setup. 
Regarding hyperparameters of \methodName, we found that simple and stable settings (e.g. the same $\alpha_i$ within a narrow, fixed range $\in \left[2,3\right]$for all experts) worked robustly across tasks.

\subsection{Experimental details}
\label{sec:exp_details}
\paragraph{Memory-Maze}
We evaluate our approach on the Memory-Maze 9$\times$9 environment using the DiT(-B) configuration~\cite{dit}. Input videos are $64\times64$ resolution and are subsampled every second frame. We train a VAE with $2\times$ spatial downsampling. The model takes in a total of 20 frames, 3 context frames and predicts the next 17 frames. For relative positional encoding, we concatenate translational and rotational changes and stack conditioning channels for skipped frames. In the case of relative position conditioning, we omit conditioning on skipped frames. The model is trained for 150k steps with a batch size of 16. We generate 512 videos for evaluation and exclude the ground truth context frames from metric computations.

We implement the STM as a DiT(-S) with sliding window attention chunk size of 20, with 2 full attention layers at the 2 and 6 layers, taking 33 frames as conditioning and predicting the next 17 frames. The LTM is simply the base model but with LoRA applied to query, key, value projections, MLPs, and out projections.

The SLTM is implemented as a DiT(-B) transformer with patch size 4 (instead of 2). It takes absolute 2D position and angle (2+2 dimensions) as direct conditioning. To infer absolute position from relative motion alone, we train an additional encoder composed of 12 3D convolutional layers. This encoder uses embeddings formed by combining the current prediction of clean frames with the relative positioning information. In our experiments we infuse the SLTM with the information by test-time training similar to the LTM. 

Our baselines take 200 context frames and predict the next 17 frames. The full attention baseline consists of a DiT(-B) transformer with patch size 4. The SSM baseline employs a DiT(-B) transformer that processes 20 frames at a time. Every third layer integrates a Mamba~\cite{gu2024mamba} layer, which causally propagates information to subsequent chunks. For the sliding window baseline, we replace the Mamba global layers with full attention layers, which mimics our STM architecture.

\paragraph{RealEstate10K}
For RealEstate10K, we follow~\cite{Song2025HistoryGuidedVD}, reusing the pretrained model and associated hyperparameters. All methods use 4 context frames and 4 future frames within the attention window. We compare against DFoT~\cite{Song2025HistoryGuidedVD}, evaluating both History Guidance variants: vanilla (HG-v) and temporal (HG-t). Each method generates 256 videos for evaluation, with 4 clean ground truth frames provided as context. Note that this setting differs from the interpolation setup in~\cite{Song2025HistoryGuidedVD}, so reused hyperparameters may not be optimal.

\paragraph{DMLab}
For DMLab, we use a UViT backbone~\cite{simple_diff}, detailed in Table~\ref{tab:uvit3d}. We follow the same diffusion and sampling procedure as in Memory Maze, but employ a continuous noise schedule and operate in pixel space. Training uses a linear learning rate scheduler with 1000 warmup steps, a learning rate of $2\times10^{-4}$, weight decay of 0.001, and AdamW optimizer. Videos are resized to $64\times64$ and subsampled with a stride of 2. We train for 50k steps with a batch size of 256. Evaluation involves generating 512 videos using 50 DDIM steps. The SD baseline mirrors the sampling algorithm used for Memory Maze. 

\begin{table}[h]
\centering
\caption{Configuration of the UViT3D model.}
\begin{tabular}{ll}
\toprule
\textbf{Component} & \textbf{Setting} \\
\midrule
Encoder Channels & [128, 128, 256, 256] \\
Block Types & 2×ResBlock, 2×TransformerBlock \\
Up/Down Blocks & [3, 3, 3] \\
Embedding Dimension & 1024 \\
Patch Size & 2 \\
Mid Transformer Blocks & 16 \\
Attention Heads & 4 \\
\bottomrule
\end{tabular}
\label{tab:uvit3d}
\end{table}

\paragraph{Minecraft}
We adopt the DiT architecture~\cite{dit}, diffusion framework, and training hyperparameters from~\cite{Song2025HistoryGuidedVD}. We train a latent diffusion model using precomputed latents obtained from a pretrained image VAE~\cite{latent_diff}. The model attends to a 20-frame window with 3 clean context frames. For the STM, we double the patch size for efficiency and use 33 context frames to predict 17 future frames. Other settings follow Section~\ref{sec:exp_details}.

\paragraph{Memory-Cards}
We use a UNet backbone~\cite{Nichol2021ImprovedDD}, detailed in Table~\ref{tab:unet3d_2d}. Input resolution is $48\times48$. The model is conditioned on 3 frames to predict 5 future frames. Diffusion and training settings follow those used for DMLab. We train for 20k steps with a batch size of 256 until convergence. For evaluation, we sample 100 videos using 20 DDIM steps.

\begin{table}[h]
\centering
\caption{Configuration of the UNet3D backbone.}
\begin{tabular}{ll}
\toprule
\textbf{Component} & \textbf{Setting} \\
\midrule
Encoder Channels & [64, 128, 256, 512] \\
Number of Resnets in Block  & 8 \\
Attention Resolutions & [8, 16, 32, 64] \\
Attention Heads & 4 \\
\bottomrule
\end{tabular}
\label{tab:unet3d_2d}
\end{table}

\section{Compute Analysis}
\label{sec:compute}
For our compute analysis, we employ the DiT configuration used in the Memory Maze experiments, retaining most settings as described in Sections~\ref{sec:exp_details}.
Inference with memory adaptation generally requires up to $2\times n-1$ compute compared to the base model, where $n$ is the additional number of experts we chose , due to the need for multiple forward passes, one through the expert model and another through the contrastive expert. However, the actual overhead depends heavily on the size of the expert networks. In settings where the experts have significantly fewer parameters than the base model (e.g., ~45\% of the pretrained model’s size in the STM, SLTM), the additional compute can be substantially reduced. In our Memory Maze setting, this results in only a $2\times$ increase with the addition of LTM and a $3\times$ increase in forward compute for using all experts in \methodNameAcr.

\begin{table}[h]
\centering
\caption{\textbf{Average runtime (in seconds) for memorization and sampling}. For 50 memorization and 50 denoising steps with different LTM, such as different LoRA ranks and a smaller adapter network.}
\label{tab:compute_analysis}
\begin{tabular}{lcc}
\toprule
\textbf{Adapter Config} & \textbf{Memorization Time} & \textbf{Sample Time} \\
\midrule
$r=64$      & 3.33 & 1.88 \\
$r=32$      & 2.42 & 1.88 \\
$r=16$      & 2.07 & 1.87 \\
LTM $(45\%)$  & 4.01 & 1.38 \\
\bottomrule
\end{tabular}
\label{tab:compute_analysis_}
\end{table}

We compare the memorization and sampling time in the streaming setting from Section \ref{sec:experiments}, we evaluate the runtime on a Nvidia RTX 4090 by sampling 64 videos with activated TorchScript compilation. Each video involves 50 sampling steps interleaved with 50 memorization steps. Average runtimes for different adapter configurations are presented in Table~\ref{tab:compute_analysis_}.

\section{Limitations}
\label{sec:limitations}

While CoME provides a flexible and effective framework for composing heterogeneous memory experts, CoME relies on the assumption that each expert provides plausible and reasonably calibrated predictions. In practice, we observed failure cases primarily when one expert produced outputs that the remaining experts considered implausible. This typically occurred when an expert’s expressiveness was severely constrained (e.g., due to aggressive parameter reduction), preventing it from reproducing basic scene structure. In such cases, the fused prediction can degrade in quality. While we did not observe instability under normal settings, CoME remains sensitive to the quality and capacity of its constituent experts.

\section{Additional Visual Examples}
\label{sec:visual}

This section provides qualitative results that complement the main experiments described in Section~\ref{sec:exp_details}. For each dataset, we visualize the generations produced by our model after receiving a fixed context window. The context frames are highlighted in red. The sampling configuration matches that used in Section~\ref{sec:exp_details}. 

We present results across the  Memory Maze (Figures~\ref{fig:mm1} and \ref{fig:mm2}), RealEstate10K (Figures~\ref{fig:re2} and \ref{fig:re3}), DMLab (Figure~\ref{fig:dmlab}), and Memory Cards (Figure~\ref{fig:mcards}) environments, highlighting spatial and temporal coherence in the generated rollouts.
For DMLab datasets, we pick videos where the memory is required, so when the actions lead to a revisit of the scene. For the other datasets, we randomly pick videos for visualization.

\section{Large Language Model Usage}
Large language models were used solely to assist in revising and improving the clarity, structure, and grammar of the camera-ready manuscript. No LLMs were used for generating experimental results, modifying data, designing algorithms, or conducting analyses. All technical content, experiments, and conclusions were developed and verified by the authors.
\begin{figure}[htbp!]
    \centering
    \includegraphics[width=0.95\linewidth]{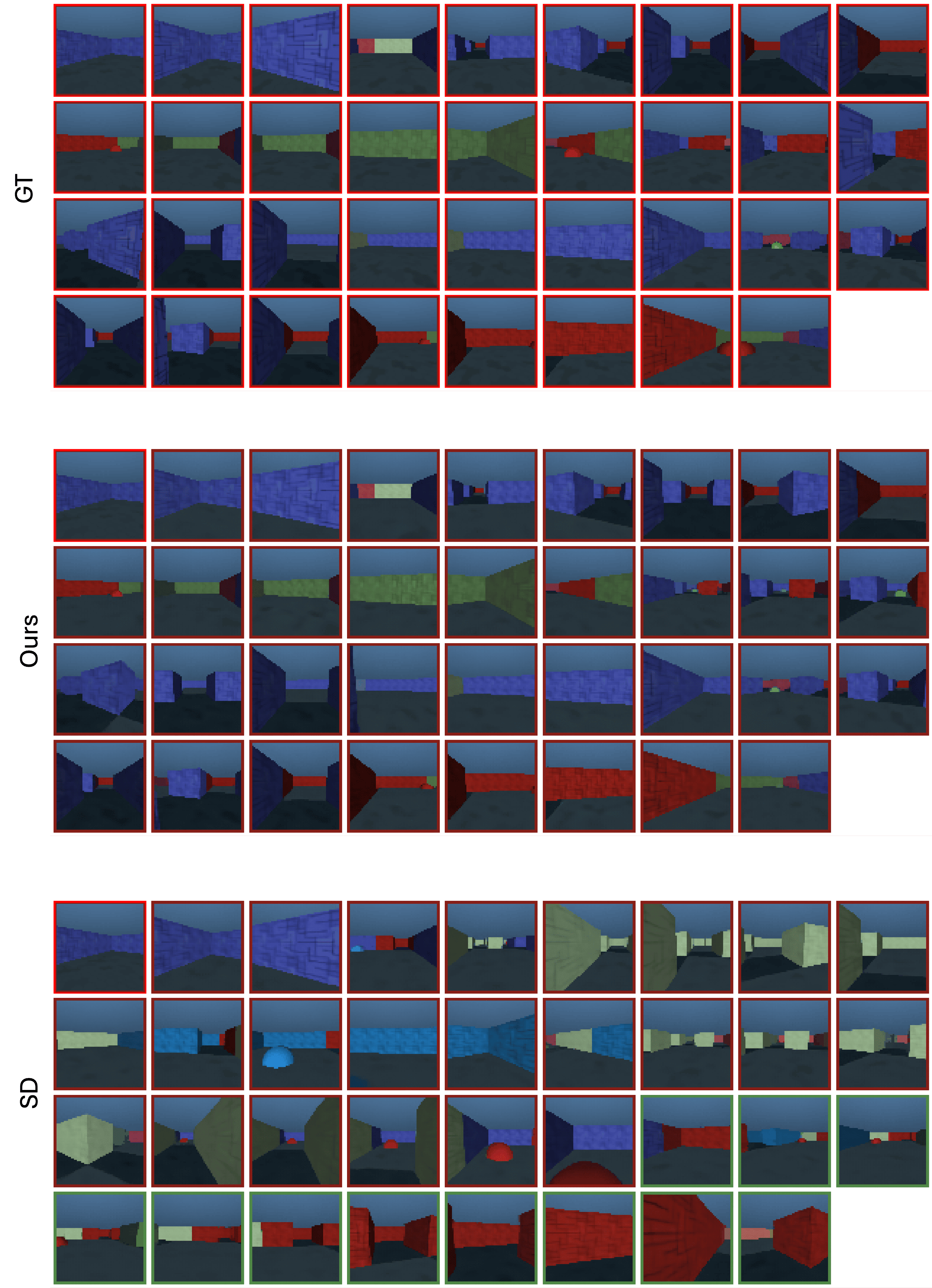}
    \caption{\textbf{Memory Maze – \methodNameAcr.} After memorizing a trajectory through the maze, and only given three context frame, the model generates frames that accurately reflect the correct turns and re-entries.}
    \label{fig:mm1}
\end{figure}
\begin{figure}[htbp!]
    \centering
    \includegraphics[width=0.95\linewidth]{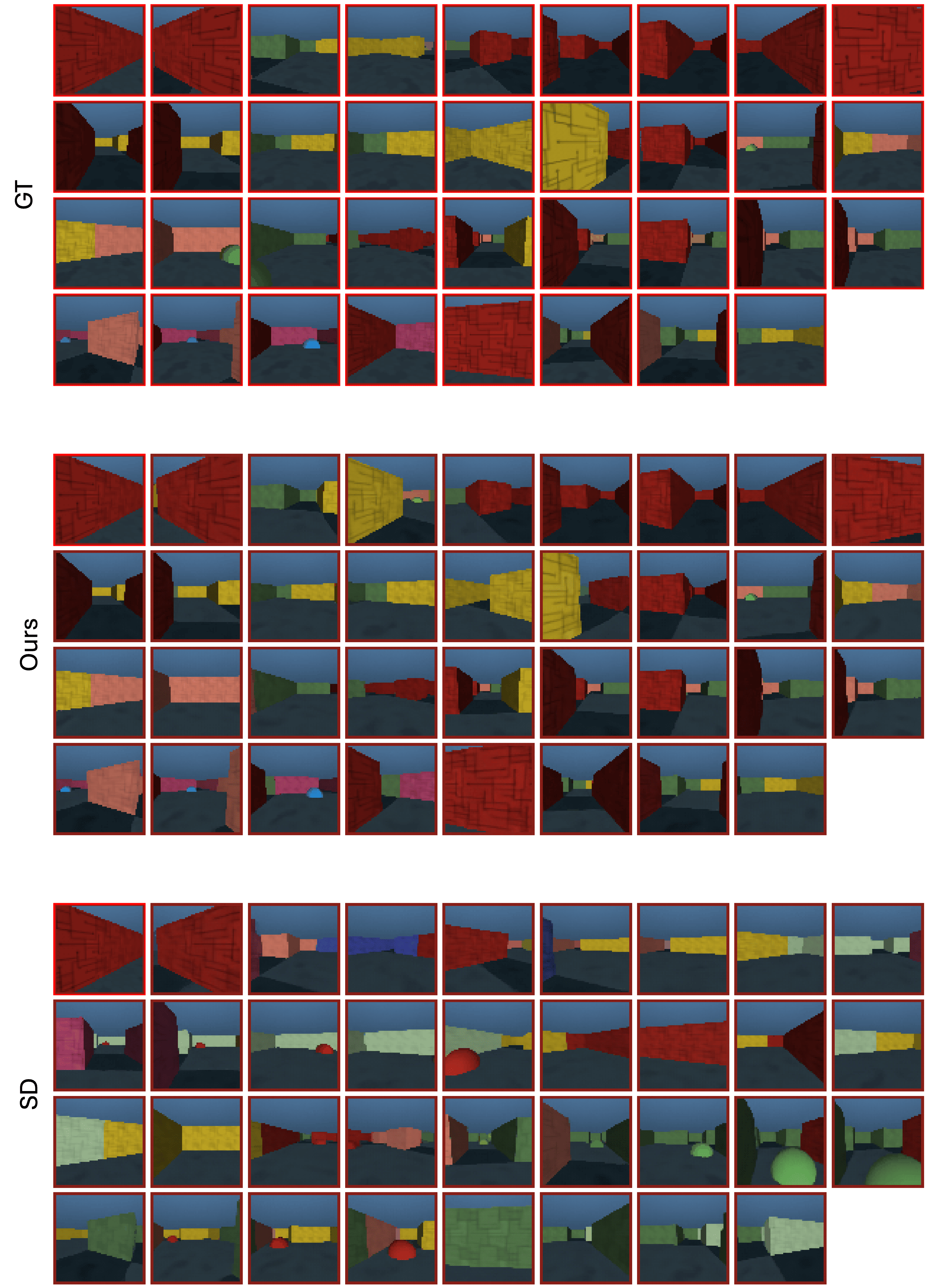}
    \caption{\textbf{Memory Maze – \methodNameAcr.} After memorizing a trajectory through the maze, and only given three context frame, the model generates frames that accurately reflect the correct turns and re-entries.}
    \label{fig:mm2}
\end{figure}
\begin{figure}[htbp!]
    \centering
    \includegraphics[width=0.95\linewidth]{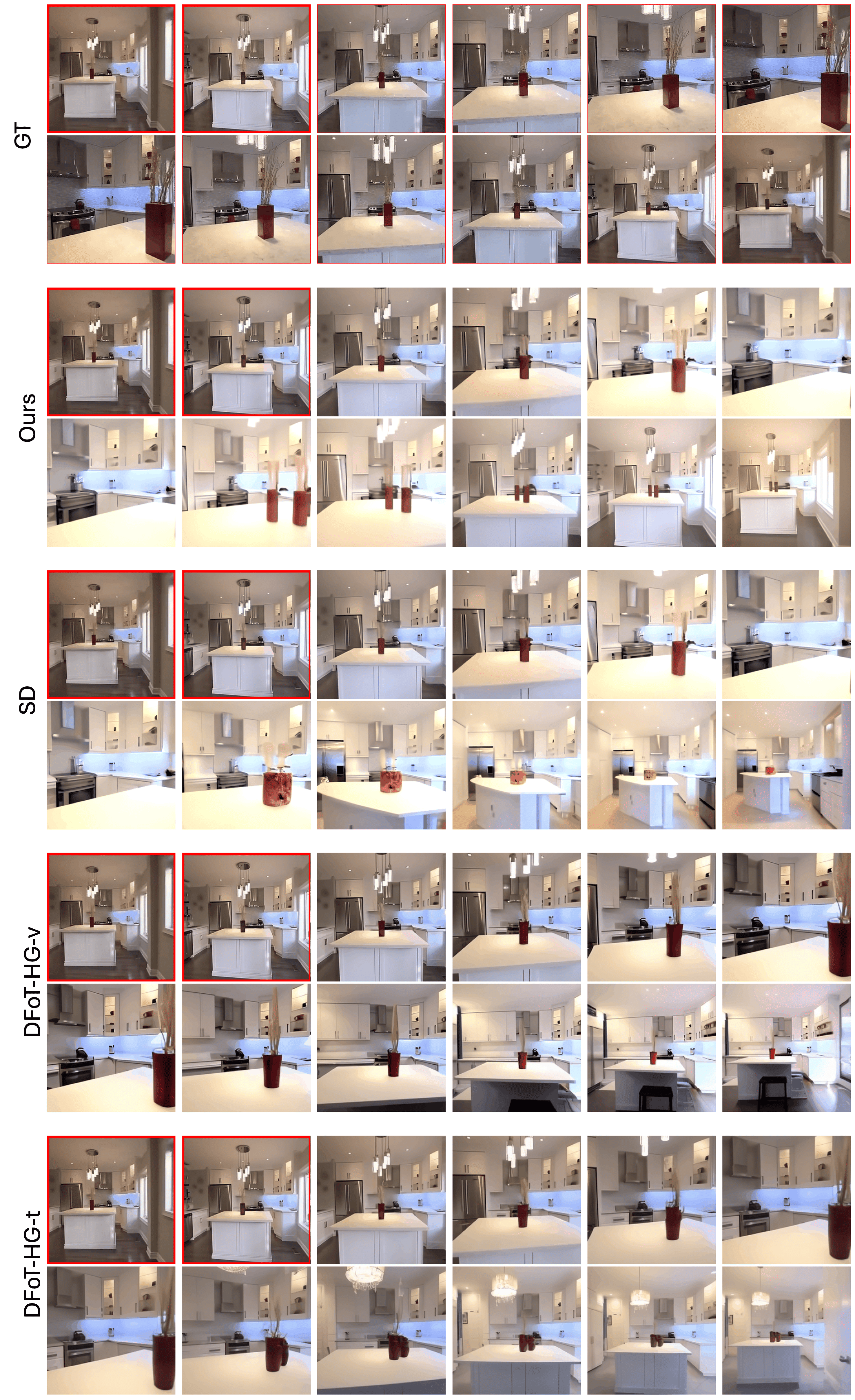}
    \caption{\textbf{RealEstate10K - \methodNameAcr.} We provide 4 ground truth context frames, here highlighted with a red border. We generate 3 forward rollouts and 3 rollouts with the reversed trajectory camera position.}
    \label{fig:re2}
\end{figure}

\begin{figure}[htbp!]
    \centering
    \includegraphics[width=0.95\linewidth]{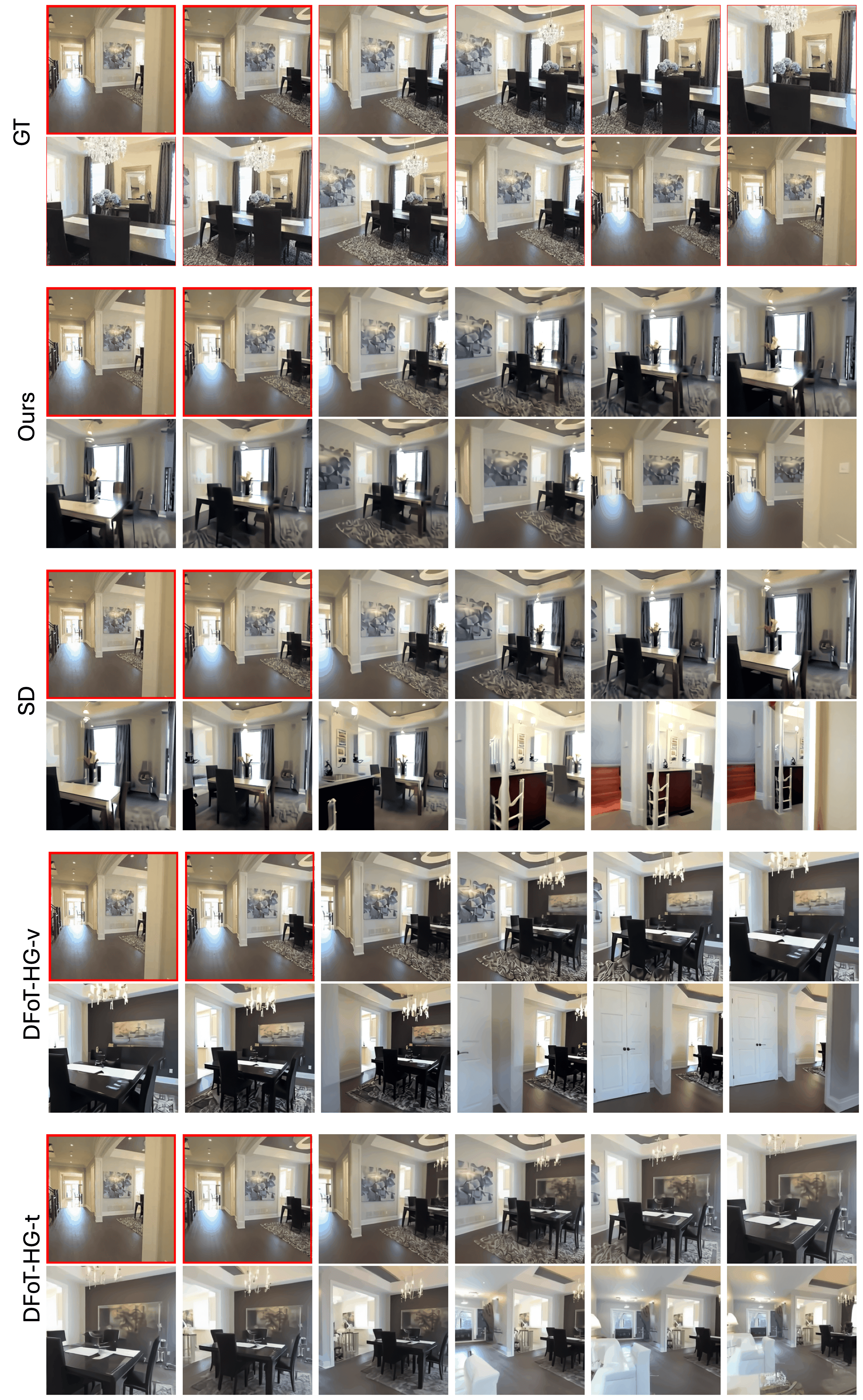}
    \caption{\textbf{RealEstate10K - \methodNameAcr.} We provide 4 ground truth context frames, here highlighted with a red border. We generate 3 forward rollouts and 3 rollouts with the reversed trajectory camera position.}
    \label{fig:re3}
\end{figure}
\begin{figure}[htbp!]
    \centering
    \includegraphics[width=0.95\linewidth]{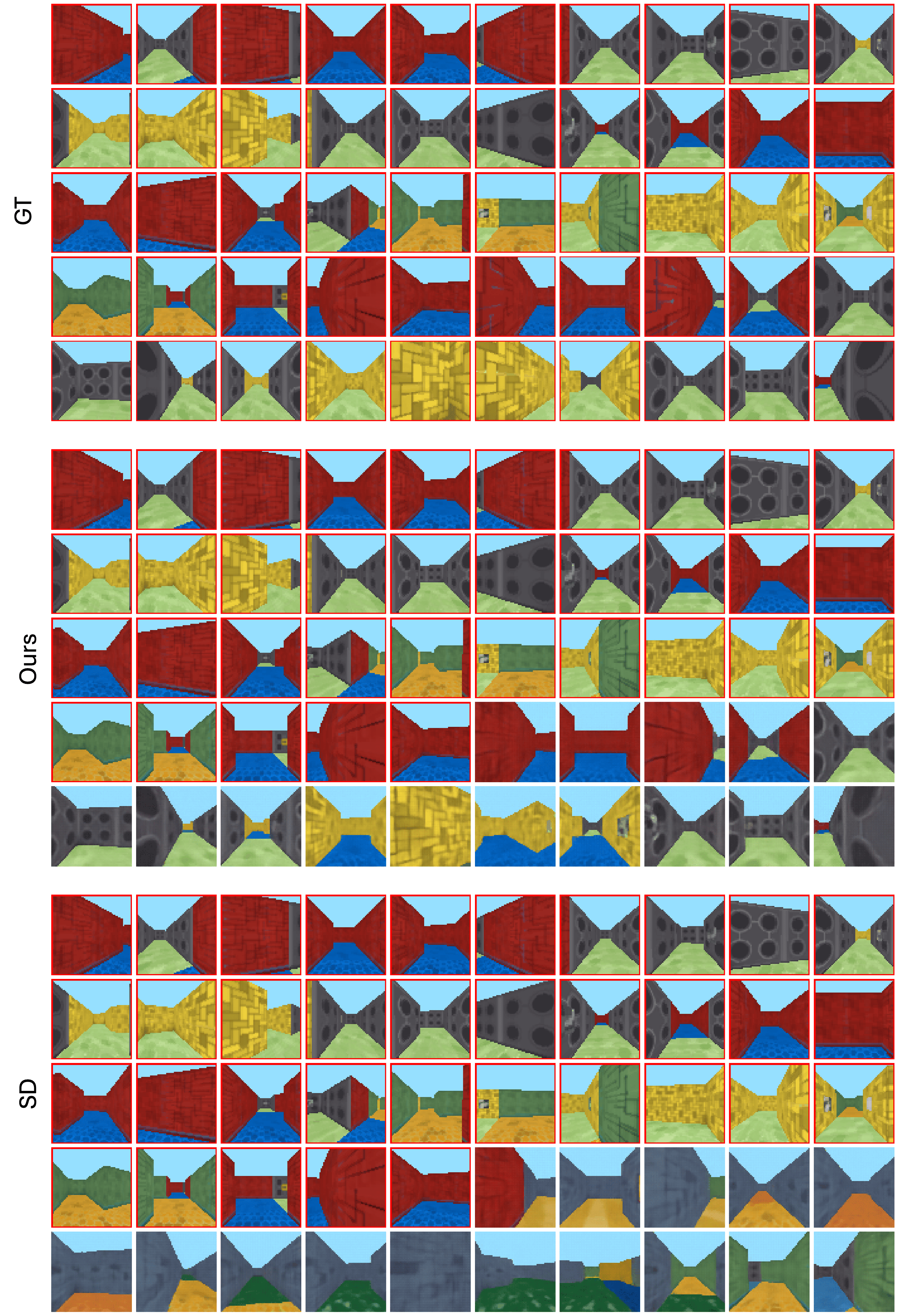}
    \caption{\textbf{DMLab - \methodNameAcr.} Given 50 context frames, here highlighted with a red border, we visualize the next 4 rollouts. Our method produces coherent forward trajectories that reflect consistent agent movement and scene structure.}
    \label{fig:dmlab}
\end{figure}
\begin{figure}[htbp!]
    \centering
    \includegraphics[width=0.95\linewidth]{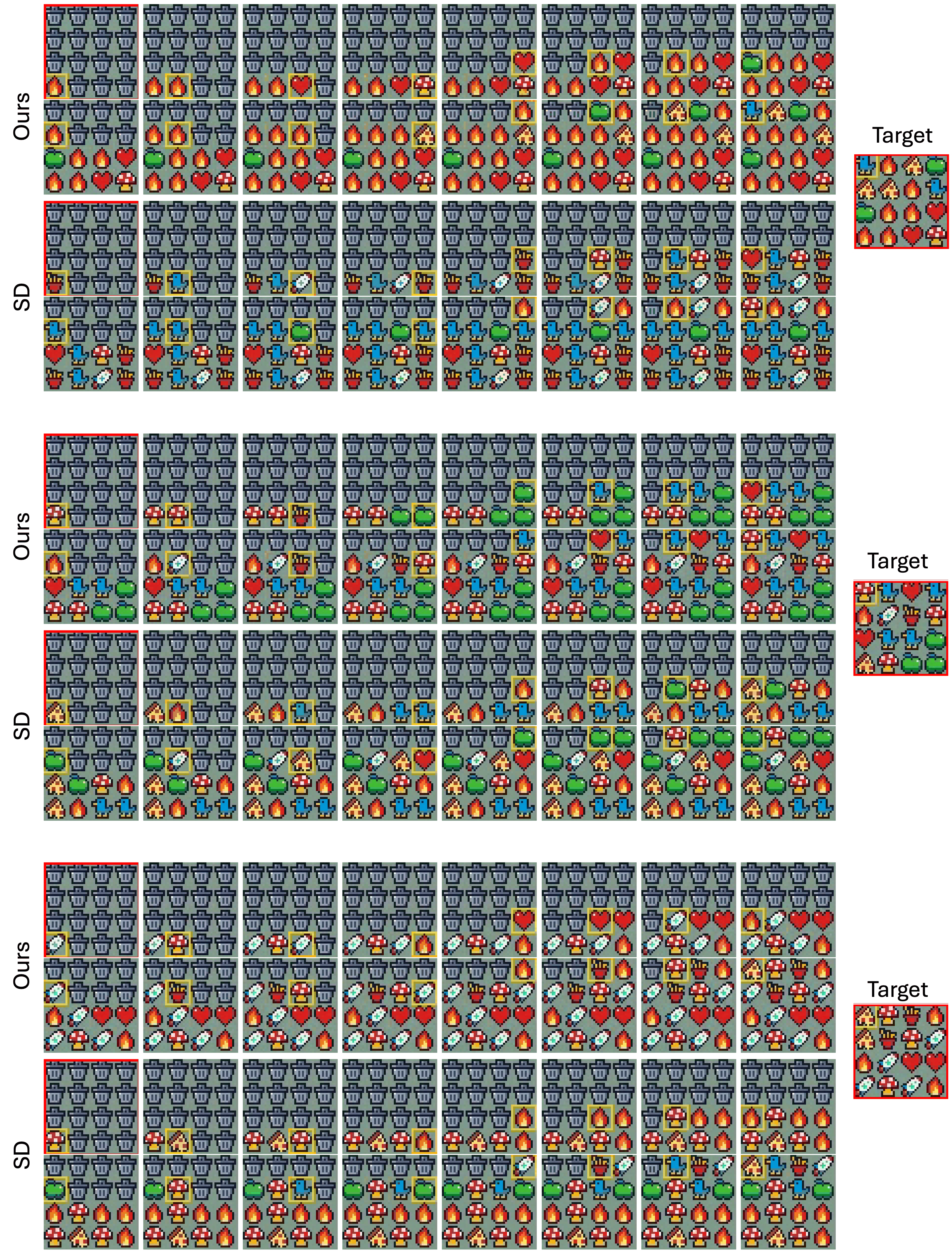}
    \caption{\textbf{Memory Cards — Discrete object recall.} At the end of the sequence, the model is evaluated on its ability to regenerate occluded tiles. After being shown a sequence of uncovering and covering actions, such that all tiles were visible at some point, our method more accurately recalls the occluded tiles, here given as 'target', demonstrating effective memory recall.}
    \label{fig:mcards}
\end{figure}

\end{document}